\documentclass{article}

\PassOptionsToPackage{numbers, compress}{natbib}

\usepackage[main, final]{neurips_2026}

\usepackage[utf8]{inputenc} 
\usepackage[T1]{fontenc}    
\usepackage{hyperref}       
\usepackage{url}            
\usepackage{booktabs}       
\usepackage{amsfonts}       
\usepackage{nicefrac}       
\usepackage{microtype}      
\usepackage{xcolor}         

\usepackage{graphicx}
\usepackage{subfigure}
\usepackage{makecell}
\usepackage{pifont}      
\usepackage{grffile}
\usepackage{colortbl}
\usepackage{tabularx}
\usepackage{bbding}
\usepackage{amsmath}
 \usepackage{multirow}
\usepackage[ruled,vlined]{algorithm2e}
\usepackage{mathtools}
\usepackage{amsthm}
\newtheorem{theorem}{Theorem}

\title{SICAF: Time-Varying Focus Bottleneck for Self-Supervised Event-Based Optical Flow with Spiking Neural Network}

\author{%
  Shuangming Yang\thanks{Corresponding authors: Shuangming Yang, Shangqi Guo, and Badong Chen.} \\
  Tianjin University \\
  Tianjin, China \\
  \texttt{yangshuangming@tju.edu.cn} \\
  \And
  Qing He \\
  Tianjin University \\
  Tianjin, China \\
  \texttt{3020234423@tju.edu.cn} \\
  \And
  Shangqi Guo\textsuperscript{*} \\
  Tsinghua University \\
  Beijing, China \\
  \texttt{shangqi\_guo@mail.tsinghua.edu.cn} \\
  \And
  Badong Chen\textsuperscript{*} \\
  Xi'an Jiaotong University \\
  Xi'an, China \\
  \texttt{chenbd@mail.xjtu.edu.cn} \\
}

\begin{document}

\maketitle

\begin{abstract}
  Event-based optical flow estimation achieves high temporal resolution and dynamic range by exploiting asynchronous events, while self-supervised learning provides an effective solution to the absence of ground-truth labels. Spiking neural networks (SNNs), as brain-inspired models, process information through asynchronous spikes, naturally aligning with optical flow events. However, the temporal irregularity across event streams hinders the compact extraction of spatiotemporal features, while the absence of explicit labels leads to spurious correlations, thereby increasing redundancy within latent representations. In this paper, we propose a novel  “Compactly Compress, Time-varying Focus” training strategy. We present SICAF (\textbf{S}pikes \textbf{I}nformation \textbf{C}ompression \textbf{A}nd \textbf{F}ocus) framework that formulates the learning process as a constrained optimization with temporal modulation. It encourages self-supervised SNNs to extract compact latent representations while preventing over-compression that would discard temporal saliency. To better enable feature selectivity across time steps, we design an estimator with learnable temporal attention, achieving context awareness with cross-time dependencies. Experimental results on several datasets demonstrate that SICAF achieves state-of-the-art performance in self-supervised SNNs, with a 8.14\% improvement in AEE on the MVSEC dataset and a 13.49\% increase in robustness under white-box adversarial attacks.
\end{abstract}

\section{Introduction}
\label{sec:intro}

Event-based optical flow estimation is an advanced technique that utilizes event cameras to measure motion in visual scenes \cite{cuadrado2023optical, ding2022spatio, paredesvalles2023taming}. With high temporal resolution and low power consumption, event cameras outperform frame-based sensors by delivering precise, rapid motion detection, enabling energy-efficient optical flow estimation in high-speed and low-light environments \cite{gallego2018unifying, schneider2023neuromorphic}.  Since ground-truth optical flow labels are difficult to obtain, recent work has adopted self-supervised learning for event-based optical flow without annotation \cite{hagenaars2021self, kosta2023adaptive, lee2020spikeflownet, sun2025stflownet}.

Emerging as a promising candidate for neuromorphic computing, spiking neural networks (SNNs) employ temporal coding mechanisms to efficiently encode time-dependent dynamic information \cite{dampfhoffer2023backpropagation, gallego2020event}. Besides, SNNs operate in an event-driven spiking manner, allowing flexible processing of asynchronous and sparse data \cite{barchid2023spiking, qiao2024spatio}. These properties make SNNs highly promising for handling event-based optical flow \cite{cuadrado2023optical, paredesvalles2019unsupervised}. Recent work on SNNs has successfully developed self-supervised learning strategies for event-based optical flow estimation \cite{kosta2023adaptive, hagenaars2021self, apolinario2023stllr}, and further incorporated learnable membrane-potential parameters to improve their representational capacity \cite{kosta2023adaptive}. Yang \emph{et al.} \cite{yang2024self} propose SeLHIB, a self-supervised high-order information bottleneck training scheme, which improves the robustness of event-based optical flow estimation.

Despite these significant advances, a glaring limitation persists: variations in motion patterns give rise to event streams that are highly irregular along the temporal dimension, while existing optical flow learning methods largely overlook such temporal heterogeneity. This omission hinders the model's ability to capture informative motion cues in the event stream, especially in complex scenarios involving fast motion, adverse illumination conditions, or highly dynamic backgrounds. This brings us to an intriguing inquiry: \emph{Is it feasible to design an SNN learning strategy that intrinsically accounts for the temporal evolution of spike-based latent representations when extracting compact yet informative motion features?}

\begin{figure}[htbp]
    \centering
    \includegraphics[width=0.8\linewidth]{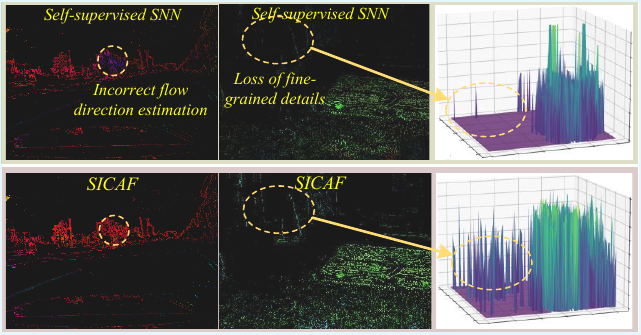}
    \caption{\textbf{Visualization of performance evaluation with previous self-supervised SNN (top) and the proposed SICAF (bottom).} Our SICAF enhances the estimation accuracy by emphasizing temporal saliency. Concurrently, our SICAF effectively captures structured yet informative motion features, improving the issues of incorrect flow direction estimation and the loss of fine-grained motion details.}
    \label{figure1}
\end{figure}

In response to the aforementioned challenge, we propose a pioneering training strategy termed \emph{\textbf{Compactly Compress, Time-varying Focus}}. Central to our approach is a novel adaptive focus mechanism equipped with a flexible information compression strategy across time steps to achieve temporal feature selectivity. While learning compact latent representations, the proposed approach explicitly takes into account the temporal heterogeneity of the input event streams. As illustrated in Fig.~\ref{figure1}, previous self-supervised SNNs often suffer from incorrect optical flow direction estimation and the loss of fine-grained motion details, whereas our approach strengthens the capability of SNNs to represent motion patterns with more accurate optical flow predictions.

In event-based optical flow estimation, general self-supervised SNNs typically adopt contrast maximization, as illustrated in Fig.~\ref{figure2}(a). First, such self-supervised SNNs lack a mechanism for compressing redundant and noisy information, and therefore fail to extract compact, task-relevant spatiotemporal representations from event data. Second, contrast maximization drives learning solely based on the spatial alignment of events, without exploiting their timestamps or temporal distribution. SeLHIB \cite{yang2024self}, as shown in Fig.~\ref{figure2}(b), captures more compact representations by formulating information bottleneck (IB) constraint. Nevertheless, this form of IB fails to consider the irregular temporal distribution of event streams. A uniform information bottleneck applied across time is inadequate  for capturing temporally salient features. As a result, jointly handling temporal non-uniformity and redundancy compression remains somewhat uncharted.

\begin{figure*}[htbp]
    \centering
    \includegraphics[width=1\linewidth]{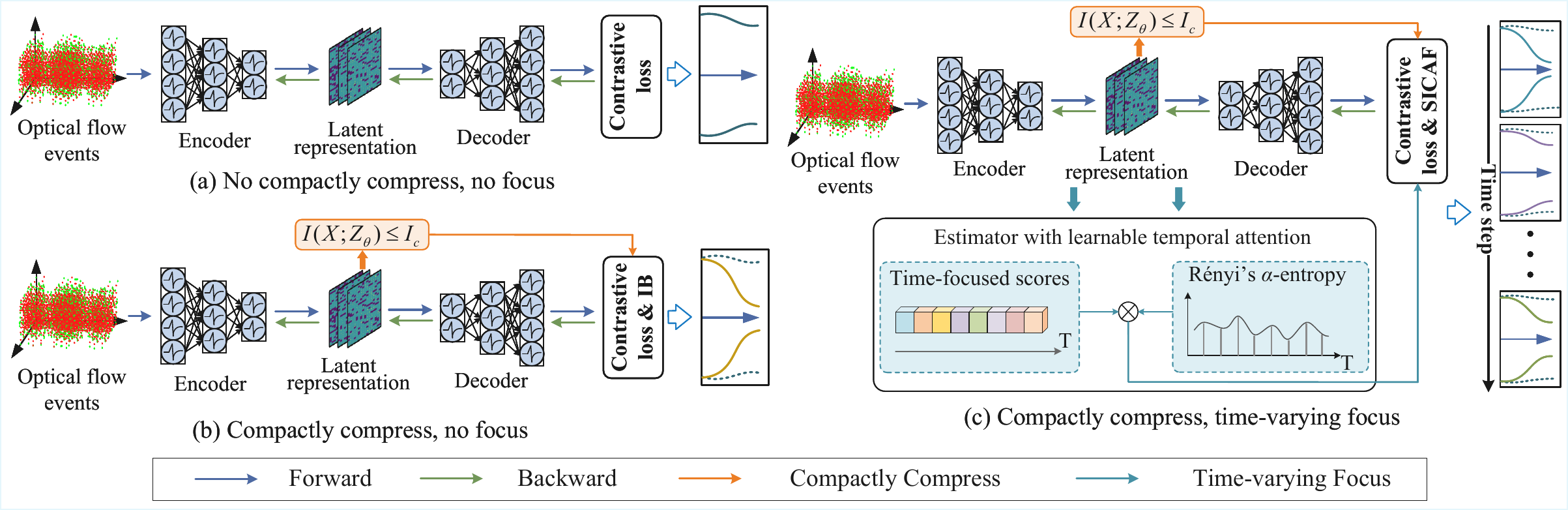}
    \caption{\textbf{Comparison between the general self-supervised SNNs \cite{kosta2023adaptive, hagenaars2021self, apolinario2023stllr}, SeLHIB \cite{yang2024self}, and ours for handling optical flow events with self-supervised learning strategy.} (a) General self-supervised SNNs typically rely on contrast maximization. This objective enhances the spatial contrast of motion-compensated events to drive learning, but it neglects the temporal non-uniformity of event data. (b) SeLHIB presents an information bottleneck constraint to encourage compact latent representations. However, because it enforces a uniform compression constraint across time, it struggles to preserve temporally salient features. (c) \textbf{SICAF} addresses these issues by formulating a constrained optimization process with temporally modulated focus. Within this framework, a learnable temporal-attention–based estimator allows SICAF to explicitly model temporal heterogeneity, encouraging the formation of compact yet more temporally informative representations.}
    \label{figure2}
\end{figure*}

Targeting the above challenge, we propose \underline{\textbf{S}}pikes \textbf{\underline{I}}nformation \textbf{\underline{C}}ompression \textbf{\underline{A}}nd \textbf{\underline{F}}ocus (\textbf{SICAF}), which formulates the learning process as a constrained optimization with temporal modulation. As depicted in Fig.~\ref{figure2}(c), SICAF learns a latent and compact space in a time-varying compression manner while capturing temporal salient feature across time steps. To this end,  we first design a novel estimator with learnable temporal attention to modulate the compression process. Specifically, to capture temporal dynamics in the representation learning process, we map the spike firing rates of latent representations into queries and keys to compute temporal attention scores. This enables the model to capture feature variations along the temporal dimension and provides a time-varying focusing capability for the input event stream. Building on Rényi's $\alpha$-entropy, the estimator robustly measures the entropy of high-dimensional latent representations at different time steps. These entropy values, weighted by the temporal attention scores, form a time-varying focusing constraint that modulates the strength of information compression. Extensive experiments demonstrate that SICAF achieves the top performance in self-supervised SNNs. Overall, the main \textbf{contributions} of this paper are:
\begin{itemize}
\item We present a novel "compactly compress, time-varying focus" paradigm,  which learns to enhance the temporal saliency of representations while suppressing redundant event information.  
\item We propose SICAF, a constrained optimization framework with temporal modulation by neural activities.  It  enjoys a concise and tractable objective and enables adaptive, time-varying compression of irregular optical-flow events.
\item We design a learnable temporal-attention estimator that directly extracts fine-grained temporal cues from spiking neural activities and dynamically guides the compression process.   
\item Experiments demonstrate that SICAF achieves superior performance compared with existing state-of-the-art SNN-based optical flow models, while significantly enhancing the robustness. SICAF can serve as a strong baseline and inspire further advancements in the field of neuromorphic computing and brain-inspired intelligence.
\end{itemize}

\section{Related Work}
\label{sec:related work}

\subsection{Event-based Optical Flow Estimation with SNNs}
Event-based optical flow estimation with SNNs has attracted growing interest. Early work exploited adaptive neurons and spike timing-dependent plasticity (STDP) to develop unsupervised SNN training for efficient high-bandwidth flow estimation \cite{paredesvalles2019unsupervised}. Hybrid architectures such as Spike-FlowNet further showed that SNN encoders can match artificial neural network (ANN) performance when paired with ANN decoders \cite{lee2020spikeflownet}. Subsequent efforts introduced self-supervised frameworks leveraging temporal cues from short event slices \cite{hagenaars2021self}, multi-sensor fusion with frame-based inputs \cite{lee2022}, and adaptive neural mechanisms for improved spatiotemporal extraction \cite{kosta2023adaptive}. S-TLLR proposed a three-factor temporal local learning rule inspired by STDP, achieving state-of-the-art performance under self-supervision \cite{Apolinario2023}. ANN-to-SNN conversion approaches have also been explored to enhance estimation accuracy \cite{sun2025stflownet}. Directly trained SNNs, however, offer unique advantages by fully exploiting the intrinsic temporal dynamics of spiking neurons, enabling low-latency and efficient spatiotemporal feature learning \cite{Wu2019, Zheng2021}. In this work, we follow this direction and focus on direct training of SNNs to unlock their full potential for event-based optical flow estimation.

\subsection{Direct Training of SNNs with the IB Principle}
Recent advances have applied the IB principle to directly train SNNs. Methods such as HOSIB and HHO-IB introduce higher-order IB formulations for surrogate-gradient learning, yielding notable gains in generalization and robustness \cite{yang2023effective, Wu2025}. SeLHIB further presents an IB-based self-supervised framework for event-based optical flow with SNNs, substantially improving robustness under noisy conditions \cite{yang2025selfsupervised}. However, this uniform temporal compression suppresses critical temporal cues. We address this limitation by proposing a strategy that preserves compactness while selectively retaining temporally salient features.

\section{Preliminary}
\label{Preliminary}

\paragraph{Spiking Neuron Model.} Spiking neurons are the core computational units of SNNs. They integrate incoming postsynaptic potentials, emit a spike once the membrane potential crosses a firing threshold, and thereby enable efficient event-driven processing of asynchronous data. The mathematical modeling is presented below:
\begin{equation}
u^{(t)}_{\textit{pre}} = \tau u^{(t-1)} + i^{(t)},
\end{equation}
\begin{equation}
u^{(t)} = u^{(t)}_{\textit{pre}} (1 - o^{(t)}),
\end{equation}
where $ u^{(t)}_{\textit{pre}} $ and $ u^{(t)} $ denote the pre-update and post-update membrane potentials in the leaky integrate-and-fire (LIF) model, respectively. $ i^{(t)} $ represents the synaptic input current at timestep $ t $, while $ o^{(t)} \in \{0,1\} $ indicates the binary output spike generated by the spiking neuron. The detailed neuron parameter settings are provided in Appendix~\ref{sec:rationale}.



\paragraph{Contrast Maximization.} 
Contrast maximization is a self-supervised approach for optical flow estimation that exploits the spatiotemporal misalignment of events in a continuous stream~\cite{gallego2019focus}. By applying motion compensation, events are warped to a common reference frame, reducing motion-induced blur. Assuming \( u(x) \) denotes the optical flow at each pixel location and \( \bar{t}(x, y; u \mid t_{\text{ref}}) \) the average timestamp at each pixel. 
The objective is to minimize the normalized sum of squared average timestamps, yielding the following convex contrastive loss:
\begin{equation}
\mathcal{L}_{\text{contrast}}(t_{\text{ref}}) = \frac{
\sum_x \bar{t}_{p=+}(x; u \mid t_{\text{ref}})^2 + \bar{t}_{p=-}(x; u \mid t_{\text{ref}})^2
}{
\sum_x \left[ n(x') > 0 \right] + \epsilon
}.
\end{equation}

\section{Method}

Fig.~\ref{figure2}(c) visually elucidates the overall procedure of SICAF.  After being encoded through $E(Z|X)$, the input $X$ is mapped to the latent representation $Z$, and the upper bound of the mutual information between $X$ and $Z$, denoted $I(X;Z) \le I_c$, enforces compact compression during the feature extraction process. We compute time-focused scores from the spike-rate matrices of $Z$ along the temporal dimension, enabling contextual awareness across time dependencies. Moreover, we employ the Rényi’s $\alpha$-entropy estimator to robustly quantify the information content of the latent variables. The entropy is multiplied by the time-focused scores, forming an estimator with learnable temporal attention that maximizes the entropy estimation to preserve the temporal heterogeneity of event streams. The pseudocode of the method is provided in Appendix~\ref{pseudocode}, and the generalization bound as well as the sample complexit bound theorems and their proofs are presented in Appendix\ref{app:theory}.

\subsection{Temporal Attention Scores}
\label{sec4.1}

Unlike conventional ANNs, SNNs communicate through discrete spike events, allowing temporal information to be encoded directly in spike timing as activity propagates across layers. Motivated by this inherent temporal structure, we compute attention weights over latent representations across time, forming a temporal attention matrix  $T_A$. Specifically, for a latent representation $Z \in \mathbb{R}^{T \times C \times D}$, we first compute its spike rate matrix $T_Z \in \mathbb{R}^{T \times C}$, and then project it into two latent spaces as follows:
\begin{equation}
    Z_Q = W_Q \cdot T_Z, \quad 
    Z_K = W_K \cdot T_Z
\end{equation}
where $Z_Q,Z_K\in\mathbb{R}^{T\times T}$ represent temporal queries and keys, respectively. The temporal attention matrix is computed as:
\begin{equation}
A = \operatorname{softmax}_{\mathrm{dim}=2}
\left(\frac{Z_Q \cdot Z_K^{T}}{\sqrt{d_k}}\right),
\end{equation}
where $d_k$ is the key dimension. The softmax is applied along the key temporal dimension (the second dimension of the attention matrix), such that $\sum_{j=1}^{T} A_{ij}=1$ for each query timestep $i$.We then average the attention weights over all query timesteps to obtain the normalized temporal attention distribution:
\begin{equation}
T_{A,j} = \frac{1}{T}\sum_{i=1}^{T} A_{ij},
\qquad j=1,\ldots,T.
\end{equation}
where $T_A\in\mathbb{R}^{T\times 1}$ assigns an attention weight to each timestep and satisfies $T_{A,j}\geq 0$ and $\sum_{j=1}^{T}T_{A,j}=1$. Since event-based streams exhibit sparse and irregular temporal distributions, $T_A$ explicitly captures this variability and provides a principled basis for the temporally selective accumulation mechanism used in SICAF.

\begin{figure}[htbp]
    \centering
    \includegraphics[width=0.6\linewidth]{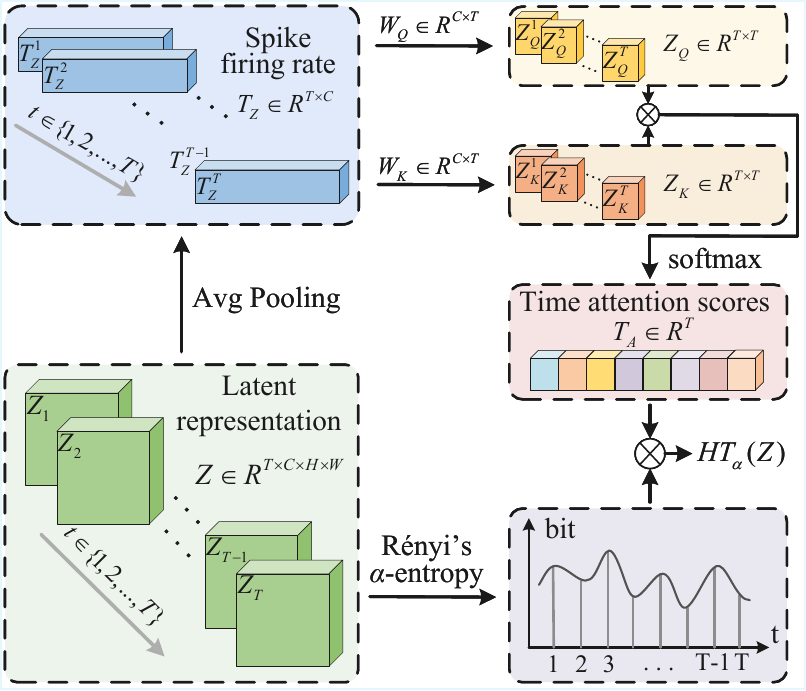}
    \caption{\textbf{Overview of the time-sensitive Rényi’s $\alpha$-entropy estimator.}  Temporal attention scores are derived from spike firing rates of latent representations to capture temporal heterogeneity. Rényi’s $\alpha$-entropy provides a robust and flexible measure of information content, enabling selective feature compression when combined with temporal attention.}
    \label{figure3}
\end{figure}

\subsection{Time-Sensitive Rényi's $\alpha$-Entropy Estimator}
\label{sec4.2}

We construct a time-sensitive Rényi's $\alpha$-entropy estimator $HT_{\alpha}(Z)$, as illustrated in Fig~\ref{figure3}, and propose a spike-rate-based temporal selection information bottleneck. In information theory, Rényi's $\alpha$-entropy can measure the amount of information contained in a variable while exhibiting better numerical stability and scalability compared to Shannon entropy. It thus provides a robust and flexible theoretical tool for estimating high-dimensional temporal information features.

Given a kernel function $k: \chi \times \chi \rightarrow \mathbb{R}$, the Gram matrix $K$ in the latent variable $Z$ can be calculated as:
$K_{ij} = k(Z_i, Z_j)$, and the normalized positive semi-definite matrix $\Phi$ is computed as: $ \Phi_{ij} = \frac{K_{ij}}{\sqrt{K_{ii}K_{jj}}} $.

For the latent variable $Z$, its Rényi's $\alpha$-entropy is given by:
\begin{equation}
\begin{aligned}
H_{\alpha}(Z) &= \frac{1}{1-\alpha}\log_2\left(\mathrm{tr}(\Phi^{\alpha})\right) \\
&= \frac{1}{1-\alpha}\log_2\left(\sum_{i=1}^{N}\lambda_i(\Phi)^{\alpha}\right),
\end{aligned}
\end{equation}
where $\lambda_i$ denotes the $i_{th}$ eigenvalue of the latent representation $Z$. In the subsection \ref{sec4.1}, we compute the temporal attention scores based on spike-rate temporal information, thereby constructing a time-sensitive Rényi's $\alpha$ entropy estimator 
\begin{equation}
HT_{\alpha}(Z) = \sum_{j=1}^{T} T_{Aj} H_{\alpha}(Z_j),
\end{equation}
This estimator flexibly evaluates the information content of the latent variable across the temporal dimension, adaptively assigning different attention weights. Thus, in the entropy estimation process, it preserves the temporal heterogeneity of the latent variable.

\subsection{SICAF Learning Strategy}

The IB framework achieves redundant information compression by minimizing the mutual information $I(Z; X)$ between the input $X$ and the latent representation $Z$. However, most existing SNN-based IB approaches fail to consider the intrinsic temporal characteristics of SNNs, leading to limited advantages in temporal sequence tasks such as optical flow estimation. To address this limitation, we construct a time-sensitive Rényi's $\alpha$-entropy estimator and propose a spike-based temporal attention information bottleneck learning strategy.  

We introduce $Y$ as an abstract task-output variable for the theoretical formulation; it denotes neither available ground-truth optical flow labels nor explicitly generated pixel-wise pseudo-labels.For input $X$ and output $Y$, we use an encoder $E(Z|X)$ that projects the input $X$ into the latent space to obtain a representation $Z$. The objective of SICAF is to realize \textbf{Compactly Compress and Time-varying Focus} during this mapping process. Specifically, \textbf{Compactly Compress} imposes an upper-bound constraint on the mutual information between $X$ and $Z$: $I(X; Z) \le I_c,$ while \textbf{Time-varying Focus} enforces a temporal modulation constraint during compression, expressed as: $HT_\alpha(Z) = \sum_{i=1}^{T} A_i H_\alpha(Z_i) \ge H_c, $
where the learning process is described as an optimization problem with temporal modulation constraints. 

Assuming that the mapping between the latent representation $Z$ and the output $Y$ is defined by $q(y|z)$, 
the optimization objective can be expressed as:
\[\mathcal{L}(q, E) = \min_{E, q} \; \mathbb{E}_{x,y \sim p(x,y)} \big[ \mathbb{E}_{z \sim E(z|x)}[-\log q(y|z)] \big]\]
\begin{equation}
s.t.\quad I(X; Z) \le I_c, \quad HT_{\alpha}(Z) \ge H_c,
\end{equation}
where the computation of $HT_{\alpha}(Z)$ is detailed in subsection~\ref{sec4.2}. The mutual information $I(X; Z)$ is defined as:
\begin{equation}
\begin{aligned}
I(X; Z) &= \int p(x, z) \log \frac{p(x, z)}{p(x)p(z)} \, dx dz \\ 
        &= \int p(x) \mathbb{E}_{E(z|x)} \log \frac{E(z|x)}{p(z)} \, dx dz,
\end{aligned}
\end{equation}
where $p(x)$ and $p(z)$ denote the marginal distributions of the input variable and the latent variable, respectively. Since the distribution $p(z)$ of the latent variable is often intractable, we approximate it with a variational distribution $r(z)$ that satisfies the KL divergence constraint $\mathrm{KL}[p(z)\|r(z)] \ge 0$. 
Hence,
\begin{equation}
\int p(z) \log p(z) \, dz \ge \int p(z) \log r(z) \, dz.
\end{equation}

Using this inequality, we obtain an upper bound for the mutual information $I(X; Z)$:
\begin{equation}
\begin{aligned}
I(X; Z) &\le \int p(x) \mathbb{E}_{E(z|x)} \log \frac{E(z|x)}{r(z)} \, dx dz \\
        &= \mathbb{E}_{x \sim p(x)}[\mathrm{KL}[E(z|x)\|r(z)]].
\end{aligned}
\end{equation}

Therefore, the constrained optimization objective can be approximated by the following upper bound:
\[\widetilde{\mathcal{L}}(q, E) = \min_{E, q} \; 
\mathbb{E}_{x,y \sim p(x,y)} \big[ \mathbb{E}_{z \sim E(z|x)}[-\log q(y|z)] \big]\]
\begin{equation}
\begin{aligned}
s.t. &\quad \mathbb{E}_{x \sim p(x)}[\mathrm{KL}[E(z|x)\|r(z)]] \le I_c, \\
&\quad HT_{\alpha}(Z) \ge H_c.
\end{aligned}
\end{equation}

For the model training process, 
we introduce regularization coefficients $\beta$ and $\gamma$ to transform the constrained optimization into an unconstrained training objective:
\begin{equation}
\begin{aligned}
J_{\text{SICAF}} 
= & \; \min_{E, q} \; 
\mathbb{E}_{x,y \sim p(x,y)} \big[ \mathbb{E}_{z \sim E(z|x)}[-\log q(y|z)] \big] \\
& + \beta \cdot \big( \mathbb{E}_{x \sim p(x)}[\mathrm{KL}[E(z|x)\|r(z)]] - I_c \big) \\
&- \gamma \cdot \big( HT_{\alpha}(Z) - H_c \big).
\end{aligned}
\end{equation}

\begin{figure*}[htbp]
    \centering
    \includegraphics[width=1\linewidth]{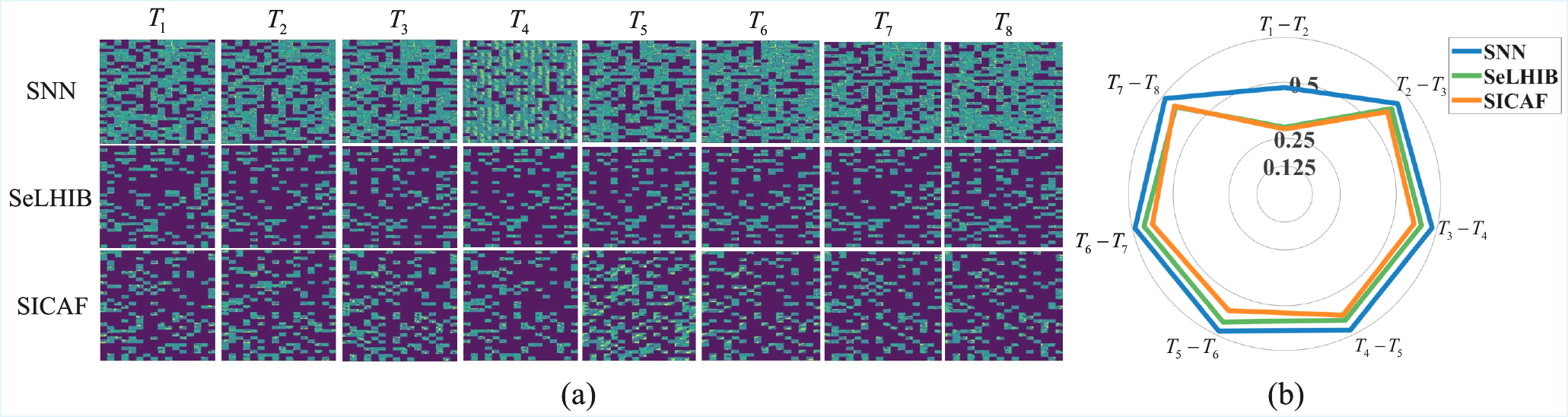}
    \caption{(a) Comparison of feature maps in latent space. Both SeLHIB and our SICAF demonstrates sparser feature maps compared to baseline self-supervised SNN, indicating the acquisition of more structured spatiotemporal features. (b) The radar charts of SSIM for self-supervised SNN, SeLHIB, and SICAF, respectively.}
    \label{figure4}
\end{figure*}

\subsection{Mechanistic Analysis of Temporal Heterogeneity in SICAF}

To evaluate the effectiveness of SICAF in achieving compact compression and time-varying focusing for irregular event streams, we visualize the latent feature maps of trained models and measure the structural similarity of their neuronal firing patterns across consecutive time steps. As shown in Fig.~\ref{figure4}(a), the SNN baseline exhibits the densest activations, whereas both SeLHIB and SICAF produce noticeably sparser patterns. This suggests that both of these two strategies encourage more compact and better-structured feature representations. 

To more intuitively examine how the firing patterns of the three methods vary across time steps, we plot radar charts of structural similarity (SSIM) to measure the similarity between adjacent time steps. The SSIM is defined as:
\begin{equation}
\mathrm{SSIM}(x, y) = 
\frac{(2\mu_x\mu_y + C_1)(2\sigma_{xy} + C_2)}
     {(\mu_x^2 + \mu_y^2 + C_1)(\sigma_x^2 + \sigma_y^2 + C_2)},
\end{equation}
where $\mu_x$, $\sigma_x^2$, $\mu_y$, and $\sigma_y^2$ denote the mean and variance of images $x$ and $y$, $\sigma_{xy}$ is their covariance, and $C_1$ and $C_2$ are stabilization constants.
As shown in Fig.~\ref{figure4}(b), SICAF achieves consistently lower SSIM scores than both the conventional self-supervised SNN and the SeLHIB baseline. Because SSIM measures structural similarity between temporally adjacent firing maps, lower SSIM indicates stronger temporal decorrelation, implying that the network produces activation patterns with higher temporal resolution. 
In essence, the lower SSIM values observed in SICAF reflect a theoretically grounded effect: SICAF enforces temporal diversity during minimizing redundant dependencies, thereby promoting a temporally selectivity regime that aligns with the irregular statistics of event-based optical flow.

\section{Experiments}
\label{Experiments}

\subsection{Experimental Setup}
We demonstrate the effectiveness of SICAF on event-based optical flow estimation task with different datasets, including multi-vehicle stereo event-camera (MVSEC), high quality frames (HQF) \cite{stoffregen2020reducing}, and event camera dataset (ECD) \cite{mueggler2017event}. All experiments were conducted using the PyTorch framework on RTX 4090 and A6000 GPUs. Adam optimizer is used with learning rate of 0.0002 and batch size of 8 for 50 epochs. Nearly 15 minutes of event data are contained in the training sequences, and they are split into 140 sequences with 128×128 size with 500k events in each sequence. Optical flow predictions are produced at each frame timestep with \textit{dt}=2, i.e., 25\% of the window events at a time. 

\subsection{Performance Evaluation on Event-based Optical Flow Estimation}

We evaluated the generalization ability of SICAF in the self-supervised optical flow estimation task across the MVSEC, ECD, and HQF datasets, and compared it with the SOTA methods. The results on the MVSEC dataset are shown in Table~\ref{table1}, while the results on the ECD and HQF datasets are shown in Appendix~\ref{OFER}. The results demonstrate that SICAF significantly improves the accuracy of SNN in self-supervised optical flow estimation tasks. Specifically, compared to the work that applies IB to the SNN self-supervised optical flow task \cite{yang2025selfsupervised}, SICAF achieves a 8.16\% improvement, which suggests that the integration of compact compression and time-varying focusing enables SICAF to achieve compact yet selective feature extraction, improving the model’s generalization ability. Furthermore, we compared the qualitative optical flow estimation results of SICAF and the method without SICAF, as shown in Figure~\ref{Qualitative evaluation1}. From the comparison of the flow estimation results, it is noticeable that SICAF provides more accurate and consistent motion direction predictions, while preserving finer motion details and producing clearer and more complete motion contours.

\begin{table}[htbp]
    \centering
    \caption{Experimental results of SICAF on MVSEC compared to previous work. (lower is better, best in red, runner-up in blue).}
    \label{table1}
    \footnotesize
    \begin{tabular}{lccccc}
        \toprule
        \multirow{3.5}{*}{Methods} & \multicolumn{5}{c}{Average Endpoint Error (AEE) $\downarrow$} \\ 
        \cmidrule{2-6}
        & \begin{tabular}[c]{@{}c@{}}Indoor\\ (flying1)\end{tabular} 
        & \begin{tabular}[c]{@{}c@{}}Indoor\\ (flying2)\end{tabular} 
        & \begin{tabular}[c]{@{}c@{}}Indoor\\ (flying3)\end{tabular} 
        & \begin{tabular}[c]{@{}c@{}}Outdoor\\ (day1)\end{tabular} 
        & Avg. \\ 
        \midrule
        LIF-EV-FlowNet \cite{hagenaars2021self} & \color{red} 0.71 & 1.44 & 1.16 & 0.53 & 0.96 \\
        LIF-FireNet \cite{hagenaars2021self} & 0.98 & 1.77 & 1.50 & 0.57 & 1.21 \\
        ALIF-EV-FlowNet \cite{hagenaars2021self} & 1.00 & 1.78 & 1.55 & 0.57 & 1.23 \\
        PLIF-EV-FlowNet \cite{hagenaars2021self} & 0.75 & 1.52 & 1.23 & 0.60 & 1.03 \\
        XLIF-EV-FlowNet \cite{hagenaars2021self} & \color{blue} 0.73 & 1.45 & 1.17 & 0.45 & 0.95 \\
        ALIF-FireNet \cite{hagenaars2021self} & 1.04 & 1.85 & 1.58 & 0.62 & 1.27 \\
        PLIF-FireNet \cite{hagenaars2021self} & 0.90 & 1.67 & 1.41 & 0.56 & 1.14 \\
        XLIF-FireNet \cite{hagenaars2021self} & 0.98 & 1.82 & 1.54 & 0.54 & 1.22 \\
        Adaptive-SpikeNet \cite{kosta2023adaptive} & 0.84 & 1.59 & 1.36 & 0.48 & 1.07 \\
        Adaptive-SpikeNet \cite{kosta2023adaptive} & 0.79 & 1.37 & 1.11 & 0.44 & 0.93 \\
        SeLHIB \cite{yang2025selfsupervised} & 0.74 & 1.48 & 1.21 & 0.50 & 0.98 \\
        FSFN \cite{apolinario2023stllr} & 0.76 & \color{blue} 1.19 & \color{blue} 1.00 & 0.50 & \color{blue} 0.86 \\
        \rowcolor[HTML]{EAF5FB}
        \textbf{EV-FlowNet + SICAF} & \color{red} 0.71 & 1.37 & 1.09 & \color{blue} 0.43 & 0.90 \\
        \rowcolor[HTML]{EAF5FB}
        \textbf{FSFN + SICAF} & \color{blue} 0.73 & \color{red} 1.09 & \color{red} 0.93 & \color{red} 0.42 & \color{red} 0.79 \\
        \bottomrule
    \end{tabular}
\end{table}

\begin{figure}[htbp]
    \centering
    \includegraphics[width=0.8\linewidth]{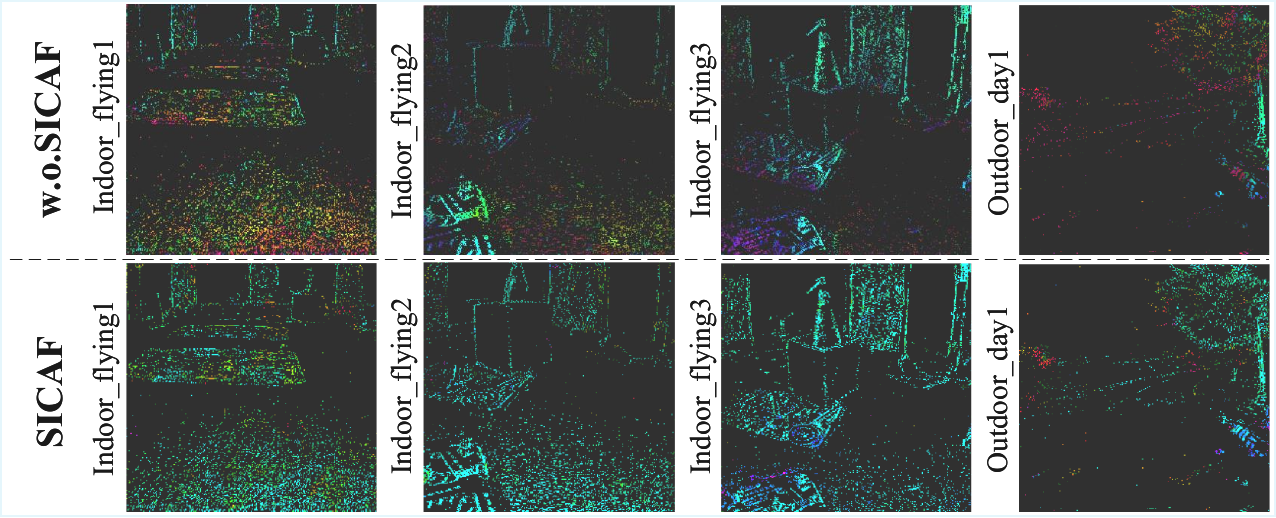}
    \caption{Qualitative evaluation of optical flow estimation results on sequences from the MVSEC dataset.}
    \label{Qualitative evaluation1}
\end{figure}


\subsection{Robustness of Event-based Optical Flow Estimation in Noisy Situations}

Due to environmental interference, event cameras often generate spurious events caused by non-physical brightness changes. These noisy events can interfere with the model’s ability to extract accurate motion features.
To further investigate the impact of SICAF on the robustness of SNNs in self-supervised optical flow estimation, we artificially injected various types of noise into the input event streams and compared the results with a baseline model, as shown in Table \ref{robust}.
The results demonstrate that SICAF significantly enhances robustness under both random and adversarial noise, achieving a 13.49\% improvement in accuracy under white-box adversarial conditions.
Additional experimental results on the ECD and HQF datasets are provided in Appendix~\ref{robustness_supp}.
SICAF enhances robustness under white-box adversarial attacks through its time-sensitive adaptive compression mechanism. The time-varying compression framework enables SICAF to dynamically focus on critical events while suppressing irrelevant or noisy data, leading to more robust and structured feature extraction and thus making it less susceptible to adversarial perturbations. It also adjusts the weighting of temporal features, further aids in capturing temporal heterogeneity and reducing the impact of adversarial noise.

\begin{table*}[ht]
\centering
\caption{ Experimental results of SICAF on MVSEC under different noise (lower is better, best in red, runner-up in blue).}
\label{robust}
\resizebox{0.99\linewidth}{!}{%
\begin{tabular}{ccccccc}
\hline
\multirow{2}{*}{Noise type}       & \multirow{2}{*}{Methods} & \multicolumn{5}{c}{AEE}                                                                 \\  \cline{3-7} 
                                  &                          & Indoor (flying1) & Indoor (flying2) & Indoor (flying3) & Outdoor (day1) & Average       \\ \hline
\multirow{4}{*}{SP}               & LIF-EV-FlowNet \cite{hagenaars2021self}                & 0.91             & 1.68             & 1.39             & 0.56           & 1.14          \\
                                  & XLIF-EV-FlowNet \cite{hagenaars2021self} & 0.86 & 1.76 & 1.21 & 0.62 &1.11
                                  \\
                                  & \cellcolor[HTML]{EAF5FB}SICAF-LIF                 & \cellcolor[HTML]{EAF5FB}\color{blue} 0.82       & \cellcolor[HTML]{EAF5FB}\color{blue} 1.44       & \cellcolor[HTML]{EAF5FB}\color{blue} 1.20       & \cellcolor[HTML]{EAF5FB}\color{blue} 0.60     & \cellcolor[HTML]{EAF5FB}\color{blue} 1.01    \\
                                  & \cellcolor[HTML]{EAF5FB}SICAF-XLIF                & \cellcolor[HTML]{EAF5FB}\color{red} 0.75    & \cellcolor[HTML]{EAF5FB}\color{red} 1.43    & \cellcolor[HTML]{EAF5FB}\color{red} 1.18    & \cellcolor[HTML]{EAF5FB}\color{red} 0.44  & \cellcolor[HTML]{EAF5FB}\color{red} 0.95 \\ \hline
\multirow{4}{*}{Gaussian}         & LIF-EV-FlowNet \cite{hagenaars2021self}               & 0.95             & 1.72             & 1.44             & 0.62           & 1.18          \\
                                  & XLIF-EV-FlowNet \cite{hagenaars2021self} & \color{blue} 0.93 & 1.70 & 1.39 & 0.62 & 1.16
                                  \\
                                  & \cellcolor[HTML]{EAF5FB}SICAF-LIF                 & \cellcolor[HTML]{EAF5FB}\color{red} 0.83    & \cellcolor[HTML]{EAF5FB}\color{red} 1.49    & \cellcolor[HTML]{EAF5FB}\color{red} 1.25    & \cellcolor[HTML]{EAF5FB}\color{blue} 0.59     & \cellcolor[HTML]{EAF5FB}\color{red} 1.04 \\
                                  & \cellcolor[HTML]{EAF5FB}SICAF-XLIF                & \cellcolor[HTML]{EAF5FB}0.94       & \cellcolor[HTML]{EAF5FB}\color{blue} 1.68       & \cellcolor[HTML]{EAF5FB}\color{blue} 1.42       & \cellcolor[HTML]{EAF5FB}\color{red} 0.53  & \cellcolor[HTML]{EAF5FB}\color{blue} 1.14    \\ \hline
\multirow{4}{*}{Black-box attack} & LIF-EV-FlowNet \cite{hagenaars2021self}               & 1.15             & 1.96             & 1.72             & 0.79           & 1.41          \\
                                  & XLIF-EV-FlowNet \cite{hagenaars2021self} & \color{red} 0.93 & \color{red} 1.79 & 1.78 & 0.73 & 1.31
                                  \\
                                  & \cellcolor[HTML]{EAF5FB}SICAF-LIF                 & \cellcolor[HTML]{EAF5FB}\color{blue} 1.03    & \cellcolor[HTML]{EAF5FB}\color{blue} 1.80    & \cellcolor[HTML]{EAF5FB}\color{red} 1.54    & \cellcolor[HTML]{EAF5FB}\color{red} 0.69  & \cellcolor[HTML]{EAF5FB}\color{red} 1.27 \\
                                  & \cellcolor[HTML]{EAF5FB}SICAF-XLIF                & \cellcolor[HTML]{EAF5FB}1.06       & \cellcolor[HTML]{EAF5FB}1.82       & \cellcolor[HTML]{EAF5FB}\color{blue} 1.58       & \cellcolor[HTML]{EAF5FB}\color{blue} 0.70     & \cellcolor[HTML]{EAF5FB}\color{blue} 1.29    \\ \hline
\multirow{4}{*}{White-box attack} & LIF-EV-FlowNet \cite{hagenaars2021self}               & 1.02             & 1.77             & 1.52             & 0.68           & 1.26          \\
                                  & XLIF-EV-FlowNet \cite{hagenaars2021self} & 0.91 & 1.82 & 1.49 & 0.71 & 1.23
                                  \\
                                  & \cellcolor[HTML]{EAF5FB}SICAF-LIF                 & \cellcolor[HTML]{EAF5FB}\color{red} 0.86    & \cellcolor[HTML]{EAF5FB}\color{red} 1.53    & \cellcolor[HTML]{EAF5FB}\color{red} 1.36    & \cellcolor[HTML]{EAF5FB}\color{red} 0.59  & \cellcolor[HTML]{EAF5FB}\color{red} 1.09 \\
                                  & \cellcolor[HTML]{EAF5FB}SICAF-XLIF                & \cellcolor[HTML]{EAF5FB}\color{blue} 0.90       & \cellcolor[HTML]{EAF5FB}\color{blue} 1.63       & \cellcolor[HTML]{EAF5FB}\color{blue} 1.39       & \cellcolor[HTML]{EAF5FB}\color{blue} 0.58     & \cellcolor[HTML]{EAF5FB}\color{blue} 1.13    \\ \hline
\end{tabular}
}
\end{table*}

\subsection{Ablation on Loss Function Components}

To investigate the roles of various components of loss functions, we compare the performance of \(L_{contrast}\) and SICAF w.o.\(L_{smooth}\), with w.o.SICAF and SICAF on MVSEC. The results in Table~\ref{ablation} show that adding \(L_{smooth}\) to \(L_{contrast}\) improves the estimation accuracy, indicating that it helps denoise after contrast enhancement, reducing the impact of irrelevant information, and thus extracting features more accurately. When SICAF is used without smoothing, the model's accuracy decreases, possibly because high-contrast noise is misinterpreted as valid motion information, leading to feature extraction errors. To mitigate this effect, adding \(L_{smooth}\) is crucial for SICAF.While the above study confirms the necessity of all three loss terms, it does not isolate
the contributions of the internal mechanisms of SICAF itself. To address this, we further conduct a component-level ablation in Section 5.5.

\begin{table}[htbp]
\centering
\caption{Ablation study on the various components of loss functions (lower is better, best in red, runner-up in blue).}
\label{ablation}
\footnotesize
\begin{tabular}{c|c|c|c|ccccc}
\toprule
\multirow{3}{*}{Methods} & \multirow{3}{*}{$L_{contrast}$} & \multicolumn{1}{l|}{\multirow{3}{*}{$L_{smooth}$}} & \multirow{3}{*}{SICAF} & \multicolumn{5}{c}{AEE}                                                                                                                                                                                                                                \\ \cline{5-9} 
                         &                              & \multicolumn{1}{l|}{}                           &                        & \begin{tabular}[c]{@{}c@{}}Indoor\\ (flying1)\end{tabular} & \begin{tabular}[c]{@{}c@{}}Indoor\\ (flying2)\end{tabular} & \begin{tabular}[c]{@{}c@{}}Indoor\\ (flying3)\end{tabular} & \begin{tabular}[c]{@{}c@{}}outdoor\\ (day1)\end{tabular} & Avg. \\ \hline
(a) & \checkmark & \XSolidBrush  & \XSolidBrush  & \color{blue} 0.77  & 1.50  & 1.25 & 0.56  & 1.02 \\
(b)   & \checkmark & \checkmark & \XSolidBrush                      & 0.81  & \color{blue} 1.45  & \color{blue} 1.19  & 0.57  & \color{blue} 1.01 \\
(c)  & \checkmark  & \XSolidBrush  & \checkmark & 0.83  & 1.60  & 1.31  & \color{blue} 0.54 & 1.07 \\
(d)   & \checkmark   & \checkmark  & \checkmark                      & \color{red} \textbf{0.71}  & \color{red} \textbf{1.37}  & \color{red} \textbf{1.09}  & \color{red} \textbf{0.43}  & \color{red} \textbf{0.90} \\ \hline
\end{tabular}
\end{table}


\subsection{Ablation on Internal Components of SICAF}
\label{sec:ablation_components}

To investigate the roles of the key components of SICAF, we compare the performance of SICAF,
SICAF under uniform compression, SICAF w.o. compact compression, and SICAF w.o. time-varying
focus on MVSEC. The results in Table~\ref{tab:ablation_components} show that SICAF achieves
the lowest AEE across all four sequences through its temporally sensitive
information-compression strategy, demonstrating the effectiveness of the proposed method.
 
When the temporally varying compression mechanism is replaced with uniform compression, the
model's accuracy decreases, possibly because a uniform information-compression strategy is
unable to extract compact and critical features effectively from event streams with temporal
heterogeneity, where excessive compression of temporal segments containing important motion
information leads to an increase in prediction error. To mitigate this effect, the temporally
varying compression is crucial for SICAF.
 
Removing either of the two key components of SICAF also degrades the prediction accuracy,
indicating that the compact compression mechanism suppresses redundant information, while the
temporally sensitive entropy constraint $HT_\alpha(Z)$ preserves critical temporal
information. These two components work collaboratively in event-based optical flow estimation
and are both indispensable.

\begin{table}[t]
\centering
\caption{Ablation study on the key components of SICAF on the MVSEC dataset (lower is better,
best in red, runner-up in blue).}
\label{tab:ablation_components}
\footnotesize
\resizebox{0.99\linewidth}{!}{%
\begin{tabular}{c|c|c|c|ccccc}
\toprule
\multirow{3}{*}{Methods} &
\multirow{3}{*}{\shortstack{Compact\\Compression}} &
\multirow{3}{*}{\shortstack{Time-Varying\\Focus}} &
\multirow{3}{*}{\shortstack{Uniform\\Compression}} &
\multicolumn{5}{c}{AEE} \\ \cline{5-9}
& & & &
\begin{tabular}[c]{@{}c@{}}Indoor\\ (flying1)\end{tabular} &
\begin{tabular}[c]{@{}c@{}}Indoor\\ (flying2)\end{tabular} &
\begin{tabular}[c]{@{}c@{}}Indoor\\ (flying3)\end{tabular} &
\begin{tabular}[c]{@{}c@{}}Outdoor\\ (day1)\end{tabular} &
Avg. \\ \hline
\textbf{(a)} & \checkmark & \checkmark & \ding{55} &
\color{red}\textbf{0.72$\pm$0.031} & \color{red}\textbf{1.38$\pm$0.026} &
\color{red}\textbf{1.09$\pm$0.03} & \color{red}\textbf{0.47$\pm$0.053} &
\color{red}\textbf{0.91$\pm$0.012} \\
\shortstack[l]{(b)} & \checkmark & \ding{55} & \checkmark &
\color{blue}0.81$\pm$0.047 & 1.53$\pm$0.044 & 1.28$\pm$0.064 &
\color{blue}0.55$\pm$0.041 & 1.04$\pm$0.019 \\
\shortstack[l]{(c)} & \ding{55} & \checkmark & \ding{55} &
\color{blue}0.81$\pm$0.075 & \color{blue}1.51$\pm$0.029 &
\color{blue}1.24$\pm$0.010 & 0.56$\pm$0.061 & \color{blue}1.03$\pm$0.043 \\
\shortstack[l]{(d)} & \checkmark & \ding{55} & \ding{55} &
\color{blue}0.81$\pm$0.021 & 1.52$\pm$0.059 & 1.28$\pm$0.025 &
0.61$\pm$0.055 & 1.06$\pm$0.036 \\
\bottomrule
\end{tabular}}
\end{table}

\section{Conclusion}


In this paper, we propose SICAF to address the challenges faced by SNNs in self-supervised optical flow estimation. SICAF introduces a novel ``Compactly Compress, Time-varying Focus'' training strategy and designs an estimator equipped with learnable temporal attention. Through a temporally modulated constrained optimization process, SICAF enables SNNs to extract compact latent representations while preserving the temporal heterogeneity of input events, thereby enhancing their ability to capture irregular spatiotemporal structures in event streams. 
Overall, SICAF provides a new pathway toward more efficient feature extraction and enhanced robustness in SNNs.

\section*{Acknowledgment}

This work was supported by the National Natural Science Foundation of China under Grant Nos. 62622610, 62376185 and U25A20540.

{
\bibliographystyle{IEEEtran}
\bibliography{reference}
}







\appendix

\section{Technical appendices and supplementary material}
\subsection{Broader Impacts}

This paper focuses on improving the compression process of traditional IB to achieve both compactness and diversity of features, thereby enhancing the generalization and robustness of SNN models in optical flow estimation tasks. In general, this work has no negative societal impacts.

\subsection{Limitation}

The proposed SICAF framework assists SNNs in achieving both compactness and diversity in feature extraction for optical flow estimation tasks. It addresses the issue of information loss that arises from traditional IB methods. However, in practical applications, SICAF may require parameter tuning to achieve better accuracy, which could be time-consuming. 

\subsection{Experiment Settings}
\label{settingsup}

All experiments were conducted using the PyTorch framework on RTX 4090 and A6000 GPUs. Adam optimizer is used with learning rate of 0.0002 and batch size of 8 for 50 epochs. Gradients are clipped based on a global norm of 100, and the number of events for each input partition is set to 1000. 10,000 events are used for each training event partition. The scaling weight for \(L_{smooth}\) is set as 0.001. Nearly 15 minutes of event data are contained in the training sequences, and they are split into 140 sequences with 128×128 size with 500k events in each sequence. Random horizontal, vertical, and polarity flips are used to augment data. The SNN employs a direct training approach using surrogate gradients. MVSEC dataset \cite{zhu2018multivehicle} is first used for evaluation with the ground-truth data. Optical flow predictions are produced at each frame timestep with \textit{dt}=2, i.e., 25\% of the window events at a time. To compare with ground-truth data, we convert predicted optical flow from units of pixels/partitions to pixel displacement by a multiplication factor \(dt_{gt}\)/\(dt_{input}\). ECD and HQF datasets are also used for evaluation of SICAF. Since there is no ground-truth data for these two datasets, the quality of the estimated optical flow is evaluated according to metrics based on the contrast maximization framework \cite{gallego2018unifying, gallego2019focus}. The average test accuracy over two runs under different random seeds and is reported as final test accuracy.

For robustness evaluation, we consider both black-box and white-box adversarial attacks. For the black-box attack, following ``Adversarially Robust Spiking Neural Networks Through Conversion'', we adopt a random-direction zeroth-order gradient estimation strategy. Candidate adversarial inputs are constructed using independent Gaussian random directions $\delta \sim \mathcal{N}(0,I)$ as $x' = x + \epsilon\delta$. The number of queries is set to $5$, the perturbation magnitude is set to $\epsilon = 0.1$, and the perturbed input is finally clipped to the range $[0,1]$. For the white-box attack, following ``A Comprehensive Analysis on Adversarial Robustness of Spiking Neural Networks'', we employ the FGSM. The perturbation is generated according to the sign of the gradient of the loss with respect to the input, with the perturbation magnitude set to $\epsilon = 0.1$.

\subsection{Additional Details on SNN Neurons}
\label{sec:rationale}
In LIF neurons, when the membrane potential $V$ reaches the threshold $m$, the neuron generates an output spike. The membrane potential threshold can be either a fixed value or can vary with the input. To better handle the complex motion changes in optical flow estimation, we set the membrane potential threshold $m$ and the leakage parameter $\lambda$ of the LIF neurons as learnable parameters to achieve improved performance. In addition, we adjust the neuron model to reduce adaptation delay and adapt more quickly to the rapidly changing motion characteristics in optical flow estimation. The output current $I$ and membrane potential threshold $m$ of the adjusted neuron are as follows:
\begin{align}
I_{i}^{k}&=\sum_{i} W_{ij}^{f} \delta_{j}^{k}+\sum_{l} W_{il}^{r} u_{j} \delta_{l}^{k - 1}-\kappa P_{i}^{k}, \label{eq:1} 
\end{align}
\begin{align}
P_{i}^{k}&=\eta P_{i}^{k - 1}+\frac{1-\eta}{\left|R_{i}\right|} \sum_{j \in R_{i}} \delta_{j}^{k - 1}, \label{eq:2} 
\end{align}
\begin{align}
m_{i}^{k}&=\alpha_{0}+\alpha_{1} P_{i}^{k}, \label{eq:3}
\end{align}
where $R_{i}$ is the set of receptive fields of neuron $i$ across all channels. For convenience, we list the hyperparameters of the neurons and provide their corresponding initialization value ranges, as shown in Table~\ref{tab:snn_hyperparam}.

\begin{table}[h!]
\centering
\caption{SNN hyperparameter initialization table}
\label{tab:snn_hyperparam}
\begin{tabular}{cc}
\hline
\textbf{Hyperparameter} & \textbf{Initialization range}  \\ \hline
$\alpha_0$ (learnable) & 0.01 \\ 
$\alpha_1$ (learnable) & 1 \\ 
$\nu$ (learnable) & $\nu = \frac{1}{1+\exp(-a)}, a \in \mathcal{N}(-4, 0.1)$ \\ 
$\delta$ & $\delta \in \{0, 1\}$ \\ 
$\eta$ (learnable) & $\eta = \frac{1}{1+\exp(-c)}, c \in \mathcal{N}(-4, 0.1)$ \\
$m$ & 0.8 \\
$\kappa$ (learnable) & 0.01 \\
$P$ & 0 \\
\hline
\end{tabular}
\end{table}

\subsection{Pseudocode}
\label{pseudocode}

The overall network in this paper processes event streams through four strided convolutional encoders, two residual blocks, and four decoder layers. SICAF trains the network parameters by maximizing the conditional entropy of intermediate variables, balancing the feature compactness and diversity of latent space.

\vspace{-5pt}
\begin{algorithm}[H]
\caption{The training process of the SICAF structure learning framework}
\label{alg:SICAF_training}

\KwIn{Event data ($X$); label ($Y$)}
\KwOut{Updated network weights ($W$); updated microscopic parameters
       ($\lambda', \gamma'$)}

\textbf{Parameters:} SNN neuron parameters ($\lambda', \gamma'$);
SICAF hyperparameters ($\lambda, \gamma$);
batch size ($B$); training epochs ($TE$);
learning rate ($r$); network depth ($H$);
neuronal membrane potential $V$ and spike output $O$\;

\textbf{Forward:}\;

Initialize the model and optimizer\;

Preprocess the input data to obtain batch data\;

\For{$i \gets 1$ \KwTo $TE$}{
    Fetch a mini-batch of size $B$\;

    \For{$h \gets 1$ \KwTo $H-1$}{
        Calculate $V_h$ and $O_h$\;
    }

    Calculate $V_H$ and $\hat{Y}$\;

    \tcp{Calculate the loss}
    $L_{\text{loss}}
    \gets
    \operatorname{SICAF}
    \bigl(
        L_{\text{flow}}(Y,\hat{Y}),
        \lambda,
        I(X;Z),
        \gamma,
        HT_{\alpha}(Z)
    \bigr)$\;
}

\textbf{Backward:}\;

\For{$h \gets H$ \KwTo $1$}{
    \eIf{$h \neq H$}{
        \tcp{Update the network weights}
        $\Delta W_h
        =
        \sum_k
        \frac{\partial L_{\text{loss}}}{\partial O_h^k}
        \frac{\partial O_h^k}{\partial U_h^k}
        \frac{\partial U_h^k}{\partial W_h^k}$\;

        $W_h \gets W_h-r\Delta W_h$\;

        \tcp{Update the microscopic parameters}
        $\Delta\lambda'_h
        =
        \sum_k
        \frac{\partial L_{\text{loss}}}{\partial O_h^k}
        \frac{\partial O_h^k}{\partial U_h^k}
        \frac{\partial U_h^k}{\partial\lambda'_h}$\;

        $\Delta\beta'_h
        =
        \sum_k
        \frac{\partial L_{\text{loss}}}{\partial O_h^k}
        \frac{\partial O_h^k}{\partial U_h^k}
        \frac{\partial U_h^k}{\partial\beta'_h}$\;

        $\lambda'_h
        \gets
        \lambda'_h-r\Delta\lambda'_h$\;

        $\beta'_h
        \gets
        \beta'_h-r\Delta\beta'_h$\;
    }{
        $\Delta W_h
        =
        \sum_k
        \frac{\partial L_{\text{loss}}}{\partial O_h^k}
        \frac{\partial O_h^k}{\partial U_h^k}
        \frac{\partial U_h^k}{\partial W_h^k}$\;

        $W_h \gets W_h-r\Delta W_h$\;
    }
}
\end{algorithm}

Meanwhile, the microscopic parameters in adaptive neurons adjust throughout the training process, further optimizing the network's weight parameters. The general training process is outlined in Algorithm~\ref{alg:SICAF_training}.

\subsection{Implementation Details of the SICAF Objective}
\label{app:sicaf_implementation}

The encoder output is denoted by $\mu_\phi(x)$, and the
variational encoding distribution is defined as
\begin{equation}
E_\phi(z\mid x)
=
\mathcal{N}\left(z;\mu_\phi(x),\sigma^2 I\right),
\qquad
\sigma^2
=
\exp(\mathrm{logvar}_t)+10^{-10},
\end{equation}
where $\mathrm{logvar}_t$ is learnable and initialized to $-1$. During training, we directly use $\mu_\phi(x)$ as the bottleneck representation to reduce estimation
variance. For a mini-batch of size $B$, we approximate the marginal distribution using kernel density estimation (KDE):
\begin{equation}
r(z)
=
\frac{1}{B}\sum_{j=1}^{B}
\mathcal{N}\left(z;\mu_j,\sigma^2 I\right),
\end{equation}
where $\mu_j=\mu_\phi(x_j)$. With $D_{ij}=\|\mu_i-\mu_j\|_2^2$, we use the following mean-based surrogate for the information-bottleneck term:
\begin{equation}
\widehat{I}(X;Z)
=
\log B
-
\frac{1}{B}\sum_{i=1}^{B}
\log\left[
\sum_{j=1}^{B}
\exp\left(-\frac{D_{ij}}{2\sigma^2}\right)
\right].
\end{equation}
This expression is obtained by evaluating $\log E_\phi(z\mid x_i)-\log r(z)$ at $z=\mu_i$, rather than computing the full expectation over $E_\phi(z\mid x_i)$. The connection to the information-bottleneck objective is given by the identity:
\begin{equation}
\mathbb{E}_{x}\left[
\mathrm{KL}\left(E_\phi(z\mid x)\,\|\,r(z)\right)
\right]
=
I(X;Z)
+
\mathrm{KL}\left(p(z)\,\|\,r(z)\right).
\end{equation}
Using KDE to approximate $p(z)$, we neglect the marginal-distribution mismatch term and use the mean-based surrogate above in training. This introduces an approximation rather than an exact KL evaluation. For entropy regularization, we use the RBF kernel:
\begin{equation}
K_{ij}
=
\exp\left(
-\frac{\|Z_i-Z_j\|_2^2}{2\sigma_{\mathrm{ker}}^2}
\right),
\end{equation}
with a fixed bandwidth $\sigma_{\mathrm{ker}}=1.0$,
which is distinct from the learnable encoding
standard deviation $\sigma$.
We set $\beta=4.8\times10^{-3}$ and
$\gamma=4\times10^{-4}$.
Moreover, $I_c=H_c=0$ are fixed constants and therefore
do not affect model gradients in the penalized
objective with fixed $\beta$ and $\gamma$.

\subsection{Theoretical Guarantees of SICAF}
\label{app:theory}

In this section, we provide a generalization bound and a corresponding
sample-complexity result for SICAF. Let
\[
S=\{X_i\}_{i=1}^{n}\sim \mathcal{D}^{n}
\]
be an i.i.d.\ training sample drawn from the data distribution
$\mathcal{D}$. The encoder $f_{\theta}$ maps the input $X$ to a latent
representation
\[
Z=f_{\theta}(X),
\]
which is subsequently processed by a predictor $h$. We denote the
population and empirical risks by
\begin{equation}
\mathcal{J}_{\mathrm{SICAF}}(h)
=
\mathbb{E}_{X\sim\mathcal{D}}
\left[
\ell\bigl(h(Z)\bigr)
\right],
\label{eq:population-risk}
\end{equation}
and
\begin{equation}
\widehat{\mathcal{J}}_{\mathrm{SICAF},S}(h)
=
\frac{1}{n}
\sum_{i=1}^{n}
\ell\bigl(h(Z_i)\bigr),
\qquad
Z_i=f_{\theta}(X_i),
\label{eq:empirical-risk}
\end{equation}
respectively.

For each input $X$, let
\[
\mathbf{a}(X)
=
\bigl(a_1(X),\ldots,a_T(X)\bigr)^{\top}
\in \mathbb{R}^{T}
\]
denote the temporal-attention vector produced by the temporal-attention
estimator. We assume that
\begin{equation}
a_t(X)\geq 0,
\qquad
\sum_{t=1}^{T}a_t(X)=1.
\label{eq:attention-simplex}
\end{equation}
The corresponding time-sensitive R\'enyi entropy is
\begin{equation}
H_{T}^{\alpha}(Z)
=
\sum_{t=1}^{T}
a_t(X)H_{\alpha}(Z_t).
\label{eq:time-sensitive-renyi}
\end{equation}

We define the mean temporal-attention vector as
\begin{equation}
\bar{\mathbf{a}}
=
\mathbb{E}_{X\sim\mathcal{D}}
\left[
\mathbf{a}(X)
\right],
\label{eq:mean-attention}
\end{equation}
and define the effective temporal-attention fluctuation by
\begin{equation}
\Phi_{\mathrm{eff}}
=
\mathbb{E}_{X\sim\mathcal{D}}
\left[
\left\|
\mathbf{a}(X)-\bar{\mathbf{a}}
\right\|_{2}^{2}
\right].
\label{eq:phi-eff}
\end{equation}
Moreover, let
\begin{equation}
\Delta_H
=
\left|
H_{T}^{\alpha}(Z)
-
\widehat{H}_{T}^{\alpha}(Z)
\right|
\label{eq:delta-h}
\end{equation}
denote the discrepancy between the population time-sensitive R\'enyi
entropy and its finite-sample estimate.

\begin{theorem}[Generalization bound of SICAF]
\label{thm:sicaf-generalization}
Let $\mathcal{F}$ be the hypothesis class induced by the encoder
$f_{\theta}$ and predictor $h$. Assume that:

\begin{enumerate}
    \item the latent representation is norm-bounded, i.e.,
    \begin{equation}
    \|Z\|_{2}\leq B
    \quad \text{almost surely};
    \label{eq:latent-bound}
    \end{equation}

    \item the loss function $\ell$ is $L$-Lipschitz with respect to the
    latent representation;

    \item the temporal-attention vector satisfies
    \eqref{eq:attention-simplex};

    \item the time-sensitive R\'enyi entropy is bounded as
    \begin{equation}
    0
    \leq
    H_{T}^{\alpha}(Z)
    \leq
    H_{\max};
    \label{eq:entropy-bound}
    \end{equation}

    \item the loss fluctuation associated with temporal modulation is
    sub-Gaussian with variance proxy $\sigma^{2}$.
\end{enumerate}

Then, for any $\delta\in(0,1)$, with probability at least $1-\delta$
over the draw of $S$, the following bound holds:
\begin{equation}
\begin{aligned}
\mathcal{J}_{\mathrm{SICAF}}(h)
-
\widehat{\mathcal{J}}_{\mathrm{SICAF},S}(h)
\leq\;&
\sqrt{
\frac{2L^{2}B^{2}}{n}
\left(
I(X;Z)
+
\ln\frac{1}{\delta}
\right)
}
\\
&+
\sqrt{
2\sigma^{2}\Phi_{\mathrm{eff}}
}
+
\gamma\Delta_H.
\end{aligned}
\label{eq:sicaf-generalization-bound}
\end{equation}
Here, $I(X;Z)$ denotes the mutual information between the input and the
latent representation, $\Phi_{\mathrm{eff}}$ quantifies the variability
of temporal attention across input samples, and $\Delta_H$ denotes the
finite-sample estimation error of the time-sensitive R\'enyi entropy.
\end{theorem}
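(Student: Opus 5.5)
The bound has three additive terms, so I will split the generalization gap into three pieces and bound each one separately. Write the SICAF loss at a sample as a base part $\ell_0(h(Z))$, plus an attention-dependent part. The attention-dependent part is obtained by replacing $\mathbf{a}(X)$ with its mean $\bar{\mathbf{a}}$ from \eqref{eq:mean-attention}, plus a fluctuation. The entropy regularizer $-\gamma H_T^{\alpha}(Z)$ enters the objective but is evaluated through its finite-sample estimate $\widehat{H}_T^{\alpha}(Z)$. Adding and subtracting the mean-attention loss and the population entropy then gives
\[
\mathcal{J}_{\mathrm{SICAF}}(h)-\widehat{\mathcal{J}}_{\mathrm{SICAF},S}(h)
= \underbrace{(\text{mean-attention gap})}_{(\mathrm{I})}
+ \underbrace{(\text{attention-fluctuation gap})}_{(\mathrm{II})}
+ \underbrace{\gamma\bigl(\widehat{H}_T^{\alpha}-H_T^{\alpha}\bigr)}_{(\mathrm{III})}.
\]
Term (III) is at most $\gamma\Delta_H$ directly from the definition \eqref{eq:delta-h}. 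Assumption 4 only guarantees that this term is finite and that the regularized loss stays bounded.

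For (I), I will use the standard information-theoretic generalization bound in the Xu--Raginsky style, applied to the representation. Assumptions 1--2 give $|\ell(h(Z))-\ell(h(Z'))|\le L\|Z-Z'\|_2\le 2LB$, so the loss is sub-Gaussian with parameter of order $LB$. By the Donsker--Varadhan variational formula, the expected gap is at most $\sqrt{2L^2B^2 I(X;Z)/n}$. To get the high-probability version, I will combine the change-of-measure inequality with a Chernoff/McDiarmid bound on the i.i.d.\ sum. This gives $\sqrt{\tfrac{2L^2B^2}{n}(I(X;Z)+\ln\tfrac1\delta)}$, with the two logarithmic contributions merged under one root by $\sqrt{a}+\sqrt{b}\le\sqrt{2(a+b)}$ and constants absorbed.

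For (II), the fluctuation of the loss is driven by $\mathbf{a}(X)-\bar{\mathbf{a}}$. Assumption 5 says this fluctuation is sub-Gaussian with variance proxy $\sigma^2$, scaled by the size of the attention deviation. I will apply the Donsker--Varadhan / transportation inequality for sub-Gaussian variables, $|\mathbb{E}_P f-\mathbb{E}_Q f|\le\sqrt{2\sigma^2\,\cdot\,(\text{divergence})}$. The role of the divergence is played by the second moment $\Phi_{\mathrm{eff}}$ from \eqref{eq:phi-eff}, and Jensen's inequality $\mathbb{E}\|\cdot\|_2\le\sqrt{\mathbb{E}\|\cdot\|_2^2}$ yields $\sqrt{2\sigma^2\Phi_{\mathrm{eff}}}$. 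Assumption 3 ensures that $\mathbf{a}(X)$ and $\bar{\mathbf{a}}$ both lie in the simplex, so $\Phi_{\mathrm{eff}}\le 2$ and the term is well defined.

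Summing (I)--(III) proves \eqref{eq:sicaf-generalization-bound}. The main obstacle is step (II). The assumption ``loss fluctuation is sub-Gaussian with variance proxy $\sigma^2$'' is vague, and to obtain exactly $\sqrt{2\sigma^2\Phi_{\mathrm{eff}}}$ with no $1/n$ factor I need to interpret it concretely. My first choice is this: the map $\mathbf{a}\mapsto$ loss has sub-Gaussian increments in the direction $\mathbf{a}-\bar{\mathbf{a}}$, with proxy $\sigma^2\|\mathbf{a}-\bar{\mathbf{a}}\|_2^2$. Under that reading, the bound is a deterministic worst-case bias term rather than a concentration term, which matches the absence of $n$. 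I will also need to check that this term does not spend any of the $\delta$ budget, so that the probability $1-\delta$ is used only in (I).
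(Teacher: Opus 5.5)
Your three-way split is the same as the paper's proof. The paper writes the gap as $\Delta_{\mathrm{IB}}+\Delta_{\mathrm{temp}}+\Delta_{\mathrm{ent}}$, asserts an ``information-theoretic generalization inequality'' for the first piece and a sub-Gaussian ``concentration bound'' for the second, and reads $\gamma\Delta_H$ off the definition. However, the justifications you supply for (I) and (II) do not go through, and the paper only asserts those two bounds, so following it does not repair them.

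In (I), Donsker--Varadhan (Xu--Raginsky) bounds the gap of a learned hypothesis through $I(S;W)$, the information between the whole sample and the learned parameters $W=(\theta,h)$. It does not bound it through the single-input quantity $I(X;Z)$. The two are unrelated in general: a fixed identity encoder has $I(S;W)=0$ but maximal $I(X;Z)$, and the example below goes the other way. Your high-probability step also does not produce the stated form:
\begin{itemize}
\item change of measure yields the realized $\mathrm{KL}(P_{W\mid S}\,\|\,P_W)$, not a mutual information;
\item McDiarmid applied to the gap of a data-dependent hypothesis needs stability;
\item merging with $\sqrt{a}+\sqrt{b}\le\sqrt{2(a+b)}$ turns $2L^2B^2$ into $4L^2B^2$, and since the statement fixes the constant, nothing can be absorbed.
\end{itemize}

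In (II), the transportation inequality $|\mathbb{E}_P f-\mathbb{E}_Q f|\le\sqrt{2\sigma^2 D(P\|Q)}$ needs a KL divergence between two measures. $\Phi_{\mathrm{eff}}$ is a variance of $\mathbf{a}(X)$, not such a divergence. Moreover, (II) is a population mean minus a sample mean, hence random, so the population quantity $\Phi_{\mathrm{eff}}$ does not control it surely. A concentration bound would instead bring a factor $1/\sqrt{n}$ and consume part of $\delta$. Your closing worry is exactly where the argument breaks.

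The way out is to take the definitions \eqref{eq:population-risk}--\eqref{eq:empirical-risk} literally. There $\mathcal{J}_{\mathrm{SICAF}}$ and $\widehat{\mathcal{J}}_{\mathrm{SICAF},S}$ are just the risk of $\ell(h(Z))$ and its sample mean, with no attention or entropy component. So there is nothing to decompose, and $\sqrt{2\sigma^2\Phi_{\mathrm{eff}}}$ and $\gamma\Delta_H$ (with $\gamma\ge 0$) are nonnegative slack.

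Your own observation that $\ell(h(Z))$ ranges over an interval of length at most $2LB$ then finishes the proof when $h$ and $f_\theta$ are fixed independently of $S$. One-sided Hoeffding gives, with probability at least $1-\delta$,
\[
\mathcal{J}_{\mathrm{SICAF}}(h)-\widehat{\mathcal{J}}_{\mathrm{SICAF},S}(h)\le\sqrt{2L^2B^2\ln(1/\delta)/n},
\]
which has exactly the stated constant. Since $I(X;Z)\ge 0$, the claimed bound follows, with $\delta$ spent once and Assumptions 3--5 unused.

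If $h$ or $f_\theta$ may depend on $S$, the bound can fail outright, so no argument can supply the $I(X;Z)$ term. Take $\mathcal{D}$ atomless and an encoder that sends the $n$ training inputs to $z_0$ and every other input to $z_1$, where $\|z_0\|_2,\|z_1\|_2\le B$ and $\|z_0-z_1\|_2=2B$. Let $\ell(h(z))=L\|z-z_0\|_2$ and use constant attention. Then $I(X;Z)=0$, $\Phi_{\mathrm{eff}}=0$, and $\Delta_H=0$, since both latent laws are point masses. Yet the gap equals $2LB$ for every $n$. Either way, the information-theoretic machinery in your plan, and in the paper's proof, is not what establishes the statement.
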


\begin{proof}
We decompose the generalization gap into three components:
\begin{equation}
\mathcal{J}_{\mathrm{SICAF}}(h)
-
\widehat{\mathcal{J}}_{\mathrm{SICAF},S}(h)
=
\Delta_{\mathrm{IB}}
+
\Delta_{\mathrm{temp}}
+
\Delta_{\mathrm{ent}},
\label{eq:gap-decomposition}
\end{equation}
where $\Delta_{\mathrm{IB}}$ is the representation-complexity term,
$\Delta_{\mathrm{temp}}$ is the additional deviation induced by temporal
attention, and $\Delta_{\mathrm{ent}}$ is the error caused by estimating
the entropy regularizer from finitely many samples.

First, since $\ell$ is $L$-Lipschitz and $\|Z\|_{2}\leq B$, the
information-theoretic generalization inequality gives, with probability
at least $1-\delta$,
\begin{equation}
\Delta_{\mathrm{IB}}
\leq
\sqrt{
\frac{2L^{2}B^{2}}{n}
\left(
I(X;Z)
+
\ln\frac{1}{\delta}
\right)
}.
\label{eq:ib-generalization-term}
\end{equation}
This term is controlled by the information-bottleneck constraint used
in SICAF. In particular, enforcing a smaller information budget for
$I(X;Z)$ reduces the complexity contribution associated with the
latent representation.

Second, SICAF employs an input-dependent temporal-attention vector
$\mathbf{a}(X)$. Its deviation from the population mean
$\bar{\mathbf{a}}$ introduces an additional fluctuation in the
temporally weighted objective. By the assumed sub-Gaussianity of this
fluctuation and the definition in \eqref{eq:phi-eff}, a concentration
bound yields
\begin{equation}
\Delta_{\mathrm{temp}}
\leq
\sqrt{
2\sigma^{2}
\mathbb{E}_{X\sim\mathcal{D}}
\left[
\left\|
\mathbf{a}(X)-\bar{\mathbf{a}}
\right\|_{2}^{2}
\right]
}
=
\sqrt{
2\sigma^{2}\Phi_{\mathrm{eff}}
}.
\label{eq:temporal-deviation}
\end{equation}
Therefore, a more stable temporal-attention distribution leads to a
smaller value of $\Phi_{\mathrm{eff}}$ and hence a tighter bound.

Third, the SICAF objective contains the entropy regularization term
$-\gamma H_{T}^{\alpha}(Z)$. In practice, this population quantity is
replaced by the finite-sample estimate
$\widehat{H}_{T}^{\alpha}(Z)$. Therefore, the absolute deviation of the
entropy contribution is bounded by
\begin{equation}
\begin{aligned}
\Delta_{\mathrm{ent}}
&=
\gamma
\left|
H_{T}^{\alpha}(Z)
-
\widehat{H}_{T}^{\alpha}(Z)
\right|
\\
&=
\gamma\Delta_H.
\end{aligned}
\label{eq:entropy-deviation}
\end{equation}
Under \eqref{eq:entropy-bound}, this term is finite; in particular,
$\Delta_H\leq H_{\max}$ whenever both entropy quantities lie in
$[0,H_{\max}]$.

Combining \eqref{eq:ib-generalization-term},
\eqref{eq:temporal-deviation}, and \eqref{eq:entropy-deviation} in
\eqref{eq:gap-decomposition} gives
\eqref{eq:sicaf-generalization-bound}.
\end{proof}

\begin{theorem}[Sample complexity of SICAF]
\label{thm:sicaf-sample-complexity}
Suppose that the assumptions of
Theorem~\ref{thm:sicaf-generalization} hold and that
\begin{equation}
I(X;Z)\leq I_{0}.
\label{eq:information-budget}
\end{equation}
Assume further that the total temporal-modulation contribution is
bounded by
\begin{equation}
\sqrt{
2\sigma^{2}\Phi_{\mathrm{eff}}
}
+
\gamma\Delta_H
\leq
\varepsilon_{\mathrm{temp}}.
\label{eq:temporal-error-budget}
\end{equation}
For any target tolerance $\varepsilon$ satisfying
\begin{equation}
\varepsilon>\varepsilon_{\mathrm{temp}},
\label{eq:epsilon-condition}
\end{equation}
it is sufficient to choose
\begin{equation}
n
\geq
\frac{
2L^{2}B^{2}
}{
\left(
\varepsilon-\varepsilon_{\mathrm{temp}}
\right)^{2}
}
\left(
I_{0}
+
\ln\frac{1}{\delta}
\right)
\label{eq:sicaf-sample-complexity}
\end{equation}
to guarantee, with probability at least $1-\delta$, that
\begin{equation}
\mathcal{J}_{\mathrm{SICAF}}(h)
-
\widehat{\mathcal{J}}_{\mathrm{SICAF},S}(h)
\leq
\varepsilon.
\label{eq:target-generalization}
\end{equation}
\end{theorem}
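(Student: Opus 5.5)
The plan is to derive Theorem~\ref{thm:sicaf-sample-complexity} directly from Theorem~\ref{thm:sicaf-generalization}. No new probabilistic argument is needed; the work is bounding the three terms in \eqref{eq:sicaf-generalization-bound} and then inverting an inequality in $n$.

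\textbf{Step 1: invoke the generalization bound.} Fix $\delta\in(0,1)$. Since the assumptions of Theorem~\ref{thm:sicaf-generalization} hold, with probability at least $1-\delta$ over $S$ the bound \eqref{eq:sicaf-generalization-bound} holds.

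\textbf{Step 2: bound the right-hand side by quantities free of the sample.}
\begin{itemize}
\item The square-root term is monotone nondecreasing in $I(X;Z)$. By the budget \eqref{eq:information-budget}, it is therefore at most $\sqrt{2L^{2}B^{2}(I_0+\ln(1/\delta))/n}$.
\item The remaining two terms are at most $\varepsilon_{\mathrm{temp}}$ by \eqref{eq:temporal-error-budget}.
\end{itemize}
On the same event of probability at least $1-\delta$, this gives
\[
\mathcal{J}_{\mathrm{SICAF}}(h)-\widehat{\mathcal{J}}_{\mathrm{SICAF},S}(h)\le \sqrt{\frac{2L^{2}B^{2}}{n}\left(I_0+\ln\frac{1}{\delta}\right)}+\varepsilon_{\mathrm{temp}}.
\]

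\textbf{Step 3: choose $n$ to close the bound.} It suffices to make the square-root term at most $\varepsilon-\varepsilon_{\mathrm{temp}}$. This quantity is strictly positive by \eqref{eq:epsilon-condition}, so both sides are nonnegative and squaring is an equivalence. Squaring and solving for $n$ gives exactly \eqref{eq:sicaf-sample-complexity}. Because the square-root term is decreasing in $n$, any larger $n$ also works, and \eqref{eq:target-generalization} follows.

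\textbf{Main obstacle.} The delicate point is that the event in Step 1 depends on $n$, but the bound in Theorem~\ref{thm:sicaf-generalization} is stated for each fixed $n$. So I will fix $n$ satisfying \eqref{eq:sicaf-sample-complexity} \emph{before} invoking Step 1, so that no union over $n$ is required. I will also note that $\Phi_{\mathrm{eff}}$ and $\Delta_H$ are treated as given by \eqref{eq:temporal-error-budget}, and I assume this bound holds on the same event rather than requiring a separate probability allocation.
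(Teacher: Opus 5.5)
Your proposal is correct and follows the paper's proof essentially step for step. You invoke Theorem~\ref{thm:sicaf-generalization}, replace $I(X;Z)$ by $I_0$ and the two temporal terms by $\varepsilon_{\mathrm{temp}}$, then use positivity of $\varepsilon-\varepsilon_{\mathrm{temp}}$ to square and solve for $n$; your extra remarks (monotonicity in $I(X;Z)$ and $n$, fixing $n$ before invoking the high-probability event, and requiring the temporal budget to hold on that same event) are sound refinements that the paper leaves implicit.
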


\begin{proof}
By Theorem~\ref{thm:sicaf-generalization},
\begin{equation}
\begin{aligned}
\mathcal{J}_{\mathrm{SICAF}}(h)
-
\widehat{\mathcal{J}}_{\mathrm{SICAF},S}(h)
\leq\;&
\sqrt{
\frac{2L^{2}B^{2}}{n}
\left(
I(X;Z)
+
\ln\frac{1}{\delta}
\right)
}
\\
&+
\sqrt{
2\sigma^{2}\Phi_{\mathrm{eff}}
}
+
\gamma\Delta_H.
\end{aligned}
\label{eq:sample-proof-start}
\end{equation}
Using \eqref{eq:information-budget} and
\eqref{eq:temporal-error-budget}, we obtain
\begin{equation}
\mathcal{J}_{\mathrm{SICAF}}(h)
-
\widehat{\mathcal{J}}_{\mathrm{SICAF},S}(h)
\leq
\sqrt{
\frac{2L^{2}B^{2}}{n}
\left(
I_{0}
+
\ln\frac{1}{\delta}
\right)
}
+
\varepsilon_{\mathrm{temp}}.
\label{eq:sample-proof-reduced}
\end{equation}
To ensure that the right-hand side of
\eqref{eq:sample-proof-reduced} is at most $\varepsilon$, it suffices
to require
\begin{equation}
\sqrt{
\frac{2L^{2}B^{2}}{n}
\left(
I_{0}
+
\ln\frac{1}{\delta}
\right)
}
\leq
\varepsilon-\varepsilon_{\mathrm{temp}}.
\label{eq:sample-proof-condition}
\end{equation}
The right-hand side is positive by
\eqref{eq:epsilon-condition}. Squaring both sides gives
\begin{equation}
\frac{2L^{2}B^{2}}{n}
\left(
I_{0}
+
\ln\frac{1}{\delta}
\right)
\leq
\left(
\varepsilon-\varepsilon_{\mathrm{temp}}
\right)^{2}.
\label{eq:sample-proof-square}
\end{equation}
Rearranging \eqref{eq:sample-proof-square} with respect to $n$ yields
\eqref{eq:sicaf-sample-complexity}. Therefore,
\eqref{eq:target-generalization} holds with probability at least
$1-\delta$.
\end{proof}

\paragraph{Remark.}
Theorem~\ref{thm:sicaf-sample-complexity} does not discard the temporal
terms. Instead, it explicitly requires the temporal-attention
fluctuation and entropy-estimation error to satisfy the error-budget
condition in \eqref{eq:temporal-error-budget}. The original
mutual-information-only sample-complexity expression is recovered as a
special case when $\varepsilon_{\mathrm{temp}}=0$, or approximately
when the temporal contribution is negligible relative to the target
generalization tolerance.

\subsection{Additional Self-Supervised Optical Flow Estimation Results}
\label{OFER}

In addition to validating the performance of SICAF on the MVSEC dataset, we also evaluated SICAF on the ECD and HQF datasets. For datasets lacking ground-truth labels, we adopted the RSAT metric to measure the quality of optical flow estimation by the model on these datasets. Its mathematical definition is as follows:

\begin{equation}
\text{RSAT} \coloneqq \frac{\mathcal{L}_{\text{contrast}}(t_{\text{ref}}^{\text{fw}} | \mathbf{u})}{\mathcal{L}_{\text{contrast}}(t_{\text{ref}}^{\text{fw}} | \mathbf{0})}.
\end{equation}

The contrast loss $\mathcal{L}_{\text{contrast}}(t_{\text{ref}})$ is defined as:

\begin{equation}
    \mathcal{L}_{\text{contrast}}(t_{\text{ref}}) = \frac{\sum_x \bar{t}_{p'=+}(\mathbf{x}; \mathbf{u} | t_{\text{ref}})^2 + \bar{t}_{p'=-}(\mathbf{x}; \mathbf{u} | t_{\text{ref}})^2}{\sum_x \mathbb{I}[n(\mathbf{x}') > 0] + \epsilon},
\end{equation}

where $\bar{t}_{p'=+}$ and $\bar{t}_{p'=-}$ represent the positive and negative timestamps respectively, $n(\mathbf{x}')$ indicates the number of events at position $\mathbf{x}'$, and $\epsilon$ is a small constant added for numerical stability. When RSAT $< 1$, it indicates that the predicted optical flow outperforms the baseline scenario composed of null vectors.

Table~\ref{table2} and Figure~\ref{Qualitative evaluation} present the results for the ECD and HQF datasets, respectively. From the experimental results, it can be observed that SICAF also effectively enhances the model's generalization ability in self-supervised optical flow estimation on the ECD and HQF datasets. These results are attributed to the maximum entropy information bottleneck increasing the randomness of the intermediate variable $T$, allowing the model to extract useful information while giving deterministic prediction results a greater variance, thus improving the model's generalization.

\begin{table}[htbp]
\centering
\caption{Experimental results on ECD and HQF datasets. (Higher FWL and lower RSAT indicate better performance. Best in red, runner-up in blue.)}
\label{table2}
\footnotesize
\begin{tabular}{lcccc}
\toprule
\multirow{2.5}{*}{Methods}
 & \multicolumn{2}{c}{ECD} & \multicolumn{2}{c}{HQF} \\
\cmidrule(r){2-3} \cmidrule(r){4-5}
 & FWL $\uparrow$ & RSAT $\downarrow$ & FWL $\uparrow$ & RSAT $\downarrow$ \\
\midrule
LIF-EV-FlowNet \cite{hagenaars2021self}    & 1.21 & \color{blue} 0.95 & 1.24 & 0.94 \\
ALIF-EV-FlowNet \cite{hagenaars2021self}   & 1.17 & 0.98 & 1.21 & 0.98 \\
PLIF-EV-FlowNet \cite{hagenaars2021self}   & \color{blue} 1.24 & \color{blue} 0.95 & \color{blue} 1.28 & \color{blue} 0.93 \\
XLIF-EV-FlowNet \cite{hagenaars2021self}   & 1.23 & \color{blue} 0.95 & 1.25 & \color{blue} 0.93 \\
\rowcolor[HTML]{EAF5FB}
\textbf{SICAF} & \color{red} 1.26 & \color{red} 0.94 & \color{red} 1.30 & \color{red} 0.92 \\
\bottomrule
\end{tabular}
\end{table}

\begin{figure}[htbp]
    \centering
    \includegraphics[width=0.6\linewidth]{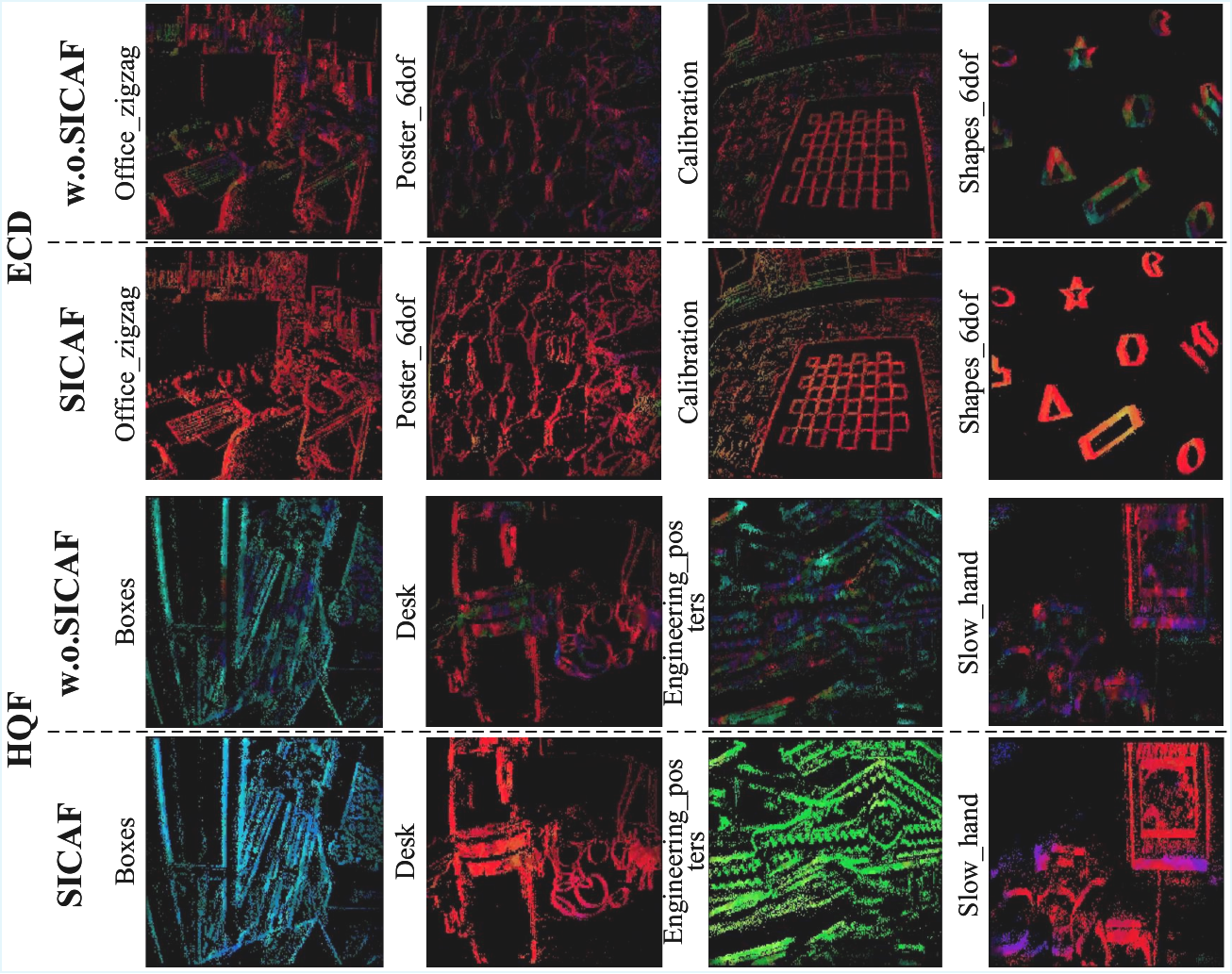}
    \caption{Qualitative evaluation of optical flow estimation results on sequences from the ECD and HQF datasets.}
    \label{Qualitative evaluation}
\end{figure}

\subsection{Additional Details on Robustness Experiments}
\label{robustness_supp}

To assess the robustness of the model, we introduced random noise attacks and adversarial noise attacks during the model testing process. Random noise attacks involve using clean datasets without additional noise during model training and adding noise to the datasets during testing to simulate disturbances in real-world scenarios. In this study, random noise attacks mainly consisted of adding two common types of noise, Gaussian noise and salt-and-pepper (SP) noise. Gaussian noise with a mean of \(\mu=0\)  and variance of \(\sigma\) can be expressed mathematically as follows:
\begin{equation}
    f_{Gaussian}(x)=\frac{1}{\sqrt{2\pi }\sigma  } e^{-\frac{(x-\mu )^2}{2\sigma ^2} },
\end{equation}
where \textit{x} is a random variable, \(\mu\) and \(\sigma\) represent its mean and variance, respectively. SP noise is composed of white and black pixels generated by severe disturbances. In addition, we explore adversarial noise attacks in two forms, i.e. white-box and black-box attacks. In white-box attack, the attacker possesses detailed information about the model, including data preprocessing, structure, and parameters, using this internal model information to generate adversarial samples. In our experiments, white-box attacks employed the fast gradient sign method (FGSM), which primarily utilizes the gradient of the loss function with respect to the input data \textit{x} to generate adversarial noise. The expression for the generated noise is as follows:
\begin{equation}
    \eta _{white-box}=\varepsilon \cdot sign(\nabla _xJ(\theta ,x,y)),
\end{equation}
where the parameter \(\varepsilon\) represents the intensity of the added noise, and \(J(\theta,x,y)\)is the loss function. \(\nabla _xJ\) denotes the gradient of the loss function with respect to the input \textit{x}. Using the FGSM algorithm, we obtain white-box adversarial noise and add it to the input dataset during the testing phase. Black-box attacks refer to scenarios where the attacker is unaware of the internal details and can only access the input and output information. Here, we use zero-order optimization methods to estimate gradients using experimental function values without directly computing or using gradient information. Zero-order optimization methods are a way to generate adversarial examples without explicit gradient information, suitable for black-box adversarial attacks. These methods optimize the input by querying the target function's output to make the model produce incorrect predictions, thus generating black-box adversarial samples. Suppose our model is $f$, and the input sample is $x$. The goal of generating adversarial attack samples is to maximize the classification error or loss of $f(x)$.

First, we initialize the input sample $\textit{x}_{\textit{adv}} = \textit{inputs}$ and generate a random direction vector $\boldsymbol{\delta} \sim \mathcal{N}(\mathbf{0}, \mathbf{I})$, where $\mathcal{N}(\mathbf{0}, \mathbf{I})$ is drawn from a multivariate normal distribution with a mean of $\mathbf{0}$ and a covariance matrix of the identity matrix. The purpose of generating the vector $\boldsymbol{\delta}$ is to explore the solution space around the input sample through random search, thereby indirectly estimating the gradient direction and finding perturbation directions that can effectively generate adversarial samples. 

We then compute the loss change for the sample when it is moved in the direction of $\boldsymbol{\delta}$. Let $L(\textit{inputs}, \textit{target})$ denote the model's loss function, where \textit{inputs} is the input sample and \textit{target} is the true label of the sample. The change in loss when the sample is moved in the direction of $\boldsymbol{\delta}$ can be expressed as:
\begin{equation}
\Delta L = L(f(\textit{x}_{\textit{adv}} + \alpha \boldsymbol{\delta}), \textit{target}) - L(f(\textit{x}_{\textit{adv}}), \textit{target}),
\end{equation}
where $\alpha$ denotes the perturbation strength. After obtaining the loss change in the direction of $\boldsymbol{\delta}$, we estimate the gradient and update the input sample as follows:
\begin{equation}
\textit{x}_{\textit{adv}} = \textit{x}_{\textit{adv}} + \alpha \boldsymbol{\delta} \cdot \textit{sign}(\Delta L),
\end{equation}

\begin{table*}[h!]
\centering
\caption{Experimental results of SICAF on ECD under different noise (lower is better, best in red).}
\label{tab:SICAF_ecd_noise}
\small
\begin{tabular}{lcccccccc}
\toprule
\textbf{Noise type} & \multicolumn{2}{c}{\textbf{SP}} & \multicolumn{2}{c}{\textbf{Gaussian}} & \multicolumn{2}{c}{\textbf{Black-box attack}} & \multicolumn{2}{c}{\textbf{White-box attack}} \\
\cmidrule(lr){2-3} \cmidrule(lr){4-5} \cmidrule(lr){6-7} \cmidrule(lr){8-9}
\textbf{Methods} & \textbf{\textbf{\citep{Hagenaars2021}}} & \textbf{SICAF} & \textbf{\textbf{ \citep{Hagenaars2021}}} & \textbf{SICAF} & \textbf{\textbf{ \citep{Hagenaars2021}}} & \textbf{SICAF} & \textbf{\textbf{ \citep{Hagenaars2021}}} & \textbf{SICAF} \\
\midrule
boxes           & 0.97  & 0.94  & 0.97  & 0.94  & 1.01  & 1.00  & 0.99  & 0.96  \\
0calibration    & 0.97  & 0.96  & 0.98  & 0.96  & 1.00  & 1.00  & 0.99  & 0.98  \\
dynamic\_6dof   & 0.94  & 0.92  & 0.95  & 0.92  & 1.00  & 0.98  & 0.97  & 0.94  \\
office\_zigzag  & 0.97  & 0.96  & 0.98  & 0.97  & 1.00  & 1.00  & 0.99  & 0.98  \\
poster\_6dof    & 0.97  & 0.94  & 0.97  & 0.94  & 1.01  & 0.99  & 0.99  & 0.90  \\
shapes\_6dof    & 1.00  & 0.99  & 1.01  & 0.99  & 1.00  & 0.99  & 1.00  & 0.98  \\
slider\_depth   & 0.91  & 0.93  & 0.91  & 0.93  & 0.99  & 0.95  & 0.92  & 0.91  \\
\midrule
\rowcolor[HTML]{EAF5FB}
\textbf{Mean}   & 0.96  & \color{red} 0.95 & 0.97  & \color{red} 0.95 & 1.01  & \color{red} 0.99 & 0.98  & \color{red} 0.96 \\
\bottomrule
\end{tabular}
\end{table*}

\begin{table*}[h!]
\centering
\caption{Experimental results of SICAF on HQF under different noise (lower is better, best in red).}
\small
\label{tab:SICAF_hqf_noise}
\begin{tabular}{lcccccccc}
\toprule
\textbf{Noise type} & \multicolumn{2}{c}{\textbf{SP}} & \multicolumn{2}{c}{\textbf{Gaussian}} & \multicolumn{2}{c}{\textbf{Black-box attack}} & \multicolumn{2}{c}{\textbf{White-box attack}} \\
\cmidrule(lr){2-3} \cmidrule(lr){4-5} \cmidrule(lr){6-7} \cmidrule(lr){8-9}
\textbf{Methods} & \textbf{\textbf{ \citep{Hagenaars2021}}} & \textbf{SICAF} & \textbf{\textbf{ \citep{Hagenaars2021}}} & \textbf{SICAF} & \textbf{\textbf{ \citep{Hagenaars2021}}} & \textbf{SICAF} & \textbf{\textbf{ \citep{Hagenaars2021}}} & \textbf{SICAF} \\
\midrule
bike\_bar\_hdr          & 0.97  & 0.95  & 0.98  & 0.95  & 1.00  & 0.98  & 0.98  & 0.95  \\
boxes                  & 0.94  & 0.92  & 0.95  & 0.93  & 1.00  & 0.97  & 0.97  & 0.93  \\
desk\_6k               & 0.92  & 0.90  & 0.93  & 0.91  & 0.99  & 0.95  & 0.95  & 0.92  \\
desk\_fast             & 0.92  & 0.89  & 0.93  & 0.90  & 0.99  & 0.95  & 0.95  & 0.91  \\
desk\_hand\_only       & 1.00  & 0.98  & 1.00  & 0.98  & 0.99  & 0.99  & 0.98  & 0.97  \\
desk\_slow             & 0.89  & 0.88  & 0.91  & 0.91  & 0.98  & 0.95  & 0.90  & 0.89  \\
engineering\_posters   & 0.96  & 0.94  & 0.97  & 0.95  & 0.99  & 0.95  & 0.99  & 0.95  \\
high\_texture\_plants  & 0.94  & 0.92  & 0.98  & 0.96  & 1.00  & 0.98  & 0.96  & 0.92  \\
poster\_pillar\_1      & 0.98  & 0.96  & 0.98  & 0.96  & 0.99  & 0.97  & 0.98  & 0.95  \\
poster\_pillar\_2      & 0.97  & 0.95  & 0.98  & 0.95  & 0.99  & 0.97  & 0.98  & 0.95  \\
reflective\_materials  & 0.97  & 0.94  & 0.97  & 0.95  & 0.99  & 0.95  & 0.98  & 0.95  \\
slow\_and\_fast\_desk  & 0.96  & 0.94  & 0.97  & 0.95  & 0.98  & 0.95  & 0.98  & 0.95  \\
slow\_hand             & 0.97  & 0.95  & 0.97  & 0.95  & 0.99  & 0.95  & 0.99  & 0.94  \\
still\_life            & 0.94  & 0.93  & 0.95  & 0.93  & 0.99  & 0.98  & 0.93  & 1.00  \\
\midrule
\rowcolor[HTML]{EAF5FB}
\textbf{Mean}          & 0.95  & \color{red} 0.93 & 0.96  & \color{red} 0.94 & 0.99  & \color{red} 0.97 & 0.96  & \color{red} 0.93 \\
\bottomrule
\end{tabular}
\end{table*}

To ensure that the generated adversarial samples are perceptually similar to the original samples, we need to keep the perturbation within a certain range to maintain the stealthiness and effectiveness of the attack. Therefore, we apply pruning operations to the generated samples:
\begin{equation}
\textit{x}_{\textit{adv}} = \textit{clip}(\textit{x}_{\textit{adv}}, \textit{inputs} - \epsilon, \textit{inputs} + \epsilon),
\end{equation}
where $\epsilon$ denotes the perturbation range set for the pruning operation. Table ~\ref{tab:SICAF_ecd_noise} and Table ~\ref{tab:SICAF_hqf_noise} show the robustness test results of SICAF on the ECD and HQF datasets.

\subsection{Computational Efficiency Analysis}
\label{app:efficiency}

We evaluate the computational efficiency of SICAF
against LF-EV-FlowNet in terms of inference latency,
inference energy per batch, memory usage, and the
number of spikes per inference batch.
Table~\ref{tab:efficiency} summarizes the results.

\begin{table}[htbp]
    \centering
    \caption{Computational efficiency comparison between
    LF-EV-FlowNet and SICAF. Lower values are preferable
    for all reported metrics.}
    \label{tab:efficiency}
    \small
    \setlength{\tabcolsep}{4pt}
    \renewcommand{\arraystretch}{1.15}
    \begin{tabular}{lcccc}
        \toprule
        Model
        & \shortstack{Latency\\(ms/batch)}
        & \shortstack{Inference energy\\per batch (J)}
        & \shortstack{Memory\\(MiB)}
        & \shortstack{Spikes per\\inference batch} \\
        \midrule
        LF-EV-FlowNet
        & $29.764 \pm 7.881$
        & $4.354 \pm 1.159$
        & $241.548 \pm 6.885$
        & $197{,}724.67 \pm 769.86$ \\
        SICAF
        & $27.164 \pm 6.645$
        & $3.873 \pm 0.436$
        & $247.503 \pm 6.582$
        & $83{,}978.00 \pm 274.99$ \\
        \bottomrule
    \end{tabular}
\end{table}

As shown in Table~\ref{tab:efficiency}, SICAF reduces
the mean inference latency from $29.764$ to $27.164$
ms/batch and the mean inference energy from $4.354$
to $3.873$ J/batch, corresponding to reductions of
approximately $8.7\%$ and $11.0\%$, respectively.
The mean number of spikes per inference batch
decreases by approximately $57.5\%$, indicating
sparser spiking activity.
Meanwhile, memory usage increases from $241.548$
to $247.503$ MiB, an increase of approximately $2.5\%$.
These results indicate lower average inference
latency and energy consumption with a modest
increase in memory usage.


\newpage
\section*{NeurIPS Paper Checklist}

\begin{enumerate}

\item {\bf Claims}
    \item[] Question: Do the main claims made in the abstract and introduction accurately reflect the paper's contributions and scope?
    \item[] Answer: \answerYes{} 
    \item[] Justification: We have described the proposed method (SICAF) and the scope of experiments in the abstract and introduction.
    \item[] Guidelines:
    \begin{itemize}
        \item The answer \answerNA{} means that the abstract and introduction do not include the claims made in the paper.
        \item The abstract and/or introduction should clearly state the claims made, including the contributions made in the paper and important assumptions and limitations. A \answerNo{} or \answerNA{} answer to this question will not be perceived well by the reviewers. 
        \item The claims made should match theoretical and experimental results, and reflect how much the results can be expected to generalize to other settings. 
        \item It is fine to include aspirational goals as motivation as long as it is clear that these goals are not attained by the paper. 
    \end{itemize}

\item {\bf Limitations}
    \item[] Question: Does the paper discuss the limitations of the work performed by the authors?
    \item[] Answer: \answerYes{} 
    \item[] Justification: We have outlined the potential limitations in the Appendix A.2.
    \item[] Guidelines:
    \begin{itemize}
        \item The answer \answerNA{} means that the paper has no limitation while the answer \answerNo{} means that the paper has limitations, but those are not discussed in the paper. 
        \item The authors are encouraged to create a separate ``Limitations'' section in their paper.
        \item The paper should point out any strong assumptions and how robust the results are to violations of these assumptions (e.g., independence assumptions, noiseless settings, model well-specification, asymptotic approximations only holding locally). The authors should reflect on how these assumptions might be violated in practice and what the implications would be.
        \item The authors should reflect on the scope of the claims made, e.g., if the approach was only tested on a few datasets or with a few runs. In general, empirical results often depend on implicit assumptions, which should be articulated.
        \item The authors should reflect on the factors that influence the performance of the approach. For example, a facial recognition algorithm may perform poorly when image resolution is low or images are taken in low lighting. Or a speech-to-text system might not be used reliably to provide closed captions for online lectures because it fails to handle technical jargon.
        \item The authors should discuss the computational efficiency of the proposed algorithms and how they scale with dataset size.
        \item If applicable, the authors should discuss possible limitations of their approach to address problems of privacy and fairness.
        \item While the authors might fear that complete honesty about limitations might be used by reviewers as grounds for rejection, a worse outcome might be that reviewers discover limitations that aren't acknowledged in the paper. The authors should use their best judgment and recognize that individual actions in favor of transparency play an important role in developing norms that preserve the integrity of the community. Reviewers will be specifically instructed to not penalize honesty concerning limitations.
    \end{itemize}

\item {\bf Theory assumptions and proofs}
    \item[] Question: For each theoretical result, does the paper provide the full set of assumptions and a complete (and correct) proof?
    \item[] Answer: \answerYes{} 
    \item[] Justification: The theorem and its proof are provided in the Appendix.
    \item[] Guidelines:
    \begin{itemize}
        \item The answer \answerNA{} means that the paper does not include theoretical results. 
        \item All the theorems, formulas, and proofs in the paper should be numbered and cross-referenced.
        \item All assumptions should be clearly stated or referenced in the statement of any theorems.
        \item The proofs can either appear in the main paper or the supplemental material, but if they appear in the supplemental material, the authors are encouraged to provide a short proof sketch to provide intuition. 
        \item Inversely, any informal proof provided in the core of the paper should be complemented by formal proofs provided in appendix or supplemental material.
        \item Theorems and Lemmas that the proof relies upon should be properly referenced. 
    \end{itemize}

    \item {\bf Experimental result reproducibility}
    \item[] Question: Does the paper fully disclose all the information needed to reproduce the main experimental results of the paper to the extent that it affects the main claims and/or conclusions of the paper (regardless of whether the code and data are provided or not)?
    \item[] Answer: \answerYes{} 
    \item[] Justification: We have provided the corresponding experimental details and some parameter settings in the Appendix, along with the pseudocode of the method.
    \item[] Guidelines:
    \begin{itemize}
        \item The answer \answerNA{} means that the paper does not include experiments.
        \item If the paper includes experiments, a \answerNo{} answer to this question will not be perceived well by the reviewers: Making the paper reproducible is important, regardless of whether the code and data are provided or not.
        \item If the contribution is a dataset and\slash or model, the authors should describe the steps taken to make their results reproducible or verifiable. 
        \item Depending on the contribution, reproducibility can be accomplished in various ways. For example, if the contribution is a novel architecture, describing the architecture fully might suffice, or if the contribution is a specific model and empirical evaluation, it may be necessary to either make it possible for others to replicate the model with the same dataset, or provide access to the model. In general. releasing code and data is often one good way to accomplish this, but reproducibility can also be provided via detailed instructions for how to replicate the results, access to a hosted model (e.g., in the case of a large language model), releasing of a model checkpoint, or other means that are appropriate to the research performed.
        \item While NeurIPS does not require releasing code, the conference does require all submissions to provide some reasonable avenue for reproducibility, which may depend on the nature of the contribution. For example
        \begin{enumerate}
            \item If the contribution is primarily a new algorithm, the paper should make it clear how to reproduce that algorithm.
            \item If the contribution is primarily a new model architecture, the paper should describe the architecture clearly and fully.
            \item If the contribution is a new model (e.g., a large language model), then there should either be a way to access this model for reproducing the results or a way to reproduce the model (e.g., with an open-source dataset or instructions for how to construct the dataset).
            \item We recognize that reproducibility may be tricky in some cases, in which case authors are welcome to describe the particular way they provide for reproducibility. In the case of closed-source models, it may be that access to the model is limited in some way (e.g., to registered users), but it should be possible for other researchers to have some path to reproducing or verifying the results.
        \end{enumerate}
    \end{itemize}

\item {\bf Open access to data and code}
    \item[] Question: Does the paper provide open access to the data and code, with sufficient instructions to faithfully reproduce the main experimental results, as described in supplemental material?
    \item[] Answer: \answerYes{} 
    \item[] Justification: Our experiments use publicly available benchmark datasets, and the code will be released to ensure that others can reproduce the results.
    \item[] Guidelines:
    \begin{itemize}
        \item The answer \answerNA{} means that paper does not include experiments requiring code.
        \item Please see the NeurIPS code and data submission guidelines (\url{https://neurips.cc/public/guides/CodeSubmissionPolicy}) for more details.
        \item While we encourage the release of code and data, we understand that this might not be possible, so \answerNo{} is an acceptable answer. Papers cannot be rejected simply for not including code, unless this is central to the contribution (e.g., for a new open-source benchmark).
        \item The instructions should contain the exact command and environment needed to run to reproduce the results. See the NeurIPS code and data submission guidelines (\url{https://neurips.cc/public/guides/CodeSubmissionPolicy}) for more details.
        \item The authors should provide instructions on data access and preparation, including how to access the raw data, preprocessed data, intermediate data, and generated data, etc.
        \item The authors should provide scripts to reproduce all experimental results for the new proposed method and baselines. If only a subset of experiments are reproducible, they should state which ones are omitted from the script and why.
        \item At submission time, to preserve anonymity, the authors should release anonymized versions (if applicable).
        \item Providing as much information as possible in supplemental material (appended to the paper) is recommended, but including URLs to data and code is permitted.
    \end{itemize}

\item {\bf Experimental setting/details}
    \item[] Question: Does the paper specify all the training and test details (e.g., data splits, hyperparameters, how they were chosen, type of optimizer) necessary to understand the results?
    \item[] Answer: \answerYes{} 
    \item[] Justification: We have provided a detailed description of our experimental setup in the Appendix.
    \item[] Guidelines:
    \begin{itemize}
        \item The answer \answerNA{} means that the paper does not include experiments.
        \item The experimental setting should be presented in the core of the paper to a level of detail that is necessary to appreciate the results and make sense of them.
        \item The full details can be provided either with the code, in appendix, or as supplemental material.
    \end{itemize}

\item {\bf Experiment statistical significance}
    \item[] Question: Does the paper report error bars suitably and correctly defined or other appropriate information about the statistical significance of the experiments?
    \item[] Answer: \answerNo{} 
    \item[] Justification: It is worth noting that in the experimental section, we compare our results with the best-reported results from other works.
    \item[] Guidelines:
    \begin{itemize}
        \item The answer \answerNA{} means that the paper does not include experiments.
        \item The authors should answer \answerYes{} if the results are accompanied by error bars, confidence intervals, or statistical significance tests, at least for the experiments that support the main claims of the paper.
        \item The factors of variability that the error bars are capturing should be clearly stated (for example, train/test split, initialization, random drawing of some parameter, or overall run with given experimental conditions).
        \item The method for calculating the error bars should be explained (closed form formula, call to a library function, bootstrap, etc.)
        \item The assumptions made should be given (e.g., Normally distributed errors).
        \item It should be clear whether the error bar is the standard deviation or the standard error of the mean.
        \item It is OK to report 1-sigma error bars, but one should state it. The authors should preferably report a 2-sigma error bar than state that they have a 96\% CI, if the hypothesis of Normality of errors is not verified.
        \item For asymmetric distributions, the authors should be careful not to show in tables or figures symmetric error bars that would yield results that are out of range (e.g., negative error rates).
        \item If error bars are reported in tables or plots, the authors should explain in the text how they were calculated and reference the corresponding figures or tables in the text.
    \end{itemize}

\item {\bf Experiments compute resources}
    \item[] Question: For each experiment, does the paper provide sufficient information on the computer resources (type of compute workers, memory, time of execution) needed to reproduce the experiments?
    \item[] Answer: \answerYes{} 
    \item[] Justification: We have specified the GPUs used in the experiments in the Appendix: GeForce RTX 4090 and A6000.
    \item[] Guidelines:
    \begin{itemize}
        \item The answer \answerNA{} means that the paper does not include experiments.
        \item The paper should indicate the type of compute workers CPU or GPU, internal cluster, or cloud provider, including relevant memory and storage.
        \item The paper should provide the amount of compute required for each of the individual experimental runs as well as estimate the total compute. 
        \item The paper should disclose whether the full research project required more compute than the experiments reported in the paper (e.g., preliminary or failed experiments that didn't make it into the paper). 
    \end{itemize}
    
\item {\bf Code of ethics}
    \item[] Question: Does the research conducted in the paper conform, in every respect, with the NeurIPS Code of Ethics \url{https://neurips.cc/public/EthicsGuidelines}?
    \item[] Answer: \answerYes{} 
    \item[] Justification: This work follows the NeurIPS Code of Ethics.
    \item[] Guidelines:
    \begin{itemize}
        \item The answer \answerNA{} means that the authors have not reviewed the NeurIPS Code of Ethics.
        \item If the authors answer \answerNo, they should explain the special circumstances that require a deviation from the Code of Ethics.
        \item The authors should make sure to preserve anonymity (e.g., if there is a special consideration due to laws or regulations in their jurisdiction).
    \end{itemize}

\item {\bf Broader impacts}
    \item[] Question: Does the paper discuss both potential positive societal impacts and negative societal impacts of the work performed?
    \item[] Answer: \answerNA{} 
    \item[] Justification: This paper focuses on improving the compression process of traditional IB to achieve both compactness and diversity of features, thereby enhancing the generalization and robustness of SNN models in optical flow estimation tasks. In general, this work has no negative societal impacts.
    \item[] Guidelines:
    \begin{itemize}
        \item The answer \answerNA{} means that there is no societal impact of the work performed.
        \item If the authors answer \answerNA{} or \answerNo, they should explain why their work has no societal impact or why the paper does not address societal impact.
        \item Examples of negative societal impacts include potential malicious or unintended uses (e.g., disinformation, generating fake profiles, surveillance), fairness considerations (e.g., deployment of technologies that could make decisions that unfairly impact specific groups), privacy considerations, and security considerations.
        \item The conference expects that many papers will be foundational research and not tied to particular applications, let alone deployments. However, if there is a direct path to any negative applications, the authors should point it out. For example, it is legitimate to point out that an improvement in the quality of generative models could be used to generate Deepfakes for disinformation. On the other hand, it is not needed to point out that a generic algorithm for optimizing neural networks could enable people to train models that generate Deepfakes faster.
        \item The authors should consider possible harms that could arise when the technology is being used as intended and functioning correctly, harms that could arise when the technology is being used as intended but gives incorrect results, and harms following from (intentional or unintentional) misuse of the technology.
        \item If there are negative societal impacts, the authors could also discuss possible mitigation strategies (e.g., gated release of models, providing defenses in addition to attacks, mechanisms for monitoring misuse, mechanisms to monitor how a system learns from feedback over time, improving the efficiency and accessibility of ML).
    \end{itemize}
    
\item {\bf Safeguards}
    \item[] Question: Does the paper describe safeguards that have been put in place for responsible release of data or models that have a high risk for misuse (e.g., pre-trained language models, image generators, or scraped datasets)?
    \item[] Answer: \answerNA{} 
    \item[] Justification: The paper proposes a framework for optimizing the SNN model in optical flow estimation tasks. Neither the model nor the standard benchmark data used poses such risk.
    \item[] Guidelines:
    \begin{itemize}
        \item The answer \answerNA{} means that the paper poses no such risks.
        \item Released models that have a high risk for misuse or dual-use should be released with necessary safeguards to allow for controlled use of the model, for example by requiring that users adhere to usage guidelines or restrictions to access the model or implementing safety filters. 
        \item Datasets that have been scraped from the Internet could pose safety risks. The authors should describe how they avoided releasing unsafe images.
        \item We recognize that providing effective safeguards is challenging, and many papers do not require this, but we encourage authors to take this into account and make a best faith effort.
    \end{itemize}

\item {\bf Licenses for existing assets}
    \item[] Question: Are the creators or original owners of assets (e.g., code, data, models), used in the paper, properly credited and are the license and terms of use explicitly mentioned and properly respected?
    \item[] Answer: \answerYes{} 
    \item[] Justification: We have properly cited the original papers corresponding to the code and datasets used in our work, and have ensured that their licenses and terms of use are respected.
    \item[] Guidelines:
    \begin{itemize}
        \item The answer \answerNA{} means that the paper does not use existing assets.
        \item The authors should cite the original paper that produced the code package or dataset.
        \item The authors should state which version of the asset is used and, if possible, include a URL.
        \item The name of the license (e.g., CC-BY 4.0) should be included for each asset.
        \item For scraped data from a particular source (e.g., website), the copyright and terms of service of that source should be provided.
        \item If assets are released, the license, copyright information, and terms of use in the package should be provided. For popular datasets, \url{paperswithcode.com/datasets} has curated licenses for some datasets. Their licensing guide can help determine the license of a dataset.
        \item For existing datasets that are re-packaged, both the original license and the license of the derived asset (if it has changed) should be provided.
        \item If this information is not available online, the authors are encouraged to reach out to the asset's creators.
    \end{itemize}

\item {\bf New assets}
    \item[] Question: Are new assets introduced in the paper well documented and is the documentation provided alongside the assets?
    \item[] Answer: \answerNA{} 
    \item[] Justification: The paper does not release new assets.
    \item[] Guidelines:
    \begin{itemize}
        \item The answer \answerNA{} means that the paper does not release new assets.
        \item Researchers should communicate the details of the dataset\slash code\slash model as part of their submissions via structured templates. This includes details about training, license, limitations, etc. 
        \item The paper should discuss whether and how consent was obtained from people whose asset is used.
        \item At submission time, remember to anonymize your assets (if applicable). You can either create an anonymized URL or include an anonymized zip file.
    \end{itemize}

\item {\bf Crowdsourcing and research with human subjects}
    \item[] Question: For crowdsourcing experiments and research with human subjects, does the paper include the full text of instructions given to participants and screenshots, if applicable, as well as details about compensation (if any)? 
    \item[] Answer: \answerNA{} 
    \item[] Justification: The paper does not involve crowdsourcing nor research with human subjects.
    \item[] Guidelines:
    \begin{itemize}
        \item The answer \answerNA{} means that the paper does not involve crowdsourcing nor research with human subjects.
        \item Including this information in the supplemental material is fine, but if the main contribution of the paper involves human subjects, then as much detail as possible should be included in the main paper. 
        \item According to the NeurIPS Code of Ethics, workers involved in data collection, curation, or other labor should be paid at least the minimum wage in the country of the data collector. 
    \end{itemize}

\item {\bf Institutional review board (IRB) approvals or equivalent for research with human subjects}
    \item[] Question: Does the paper describe potential risks incurred by study participants, whether such risks were disclosed to the subjects, and whether Institutional Review Board (IRB) approvals (or an equivalent approval/review based on the requirements of your country or institution) were obtained?
    \item[] Answer: \answerNA{} 
    \item[] Justification: The research does not involve human subjects, therefore IRB approval is not applicable.
    \item[] Guidelines:
    \begin{itemize}
        \item The answer \answerNA{} means that the paper does not involve crowdsourcing nor research with human subjects.
        \item Depending on the country in which research is conducted, IRB approval (or equivalent) may be required for any human subjects research. If you obtained IRB approval, you should clearly state this in the paper. 
        \item We recognize that the procedures for this may vary significantly between institutions and locations, and we expect authors to adhere to the NeurIPS Code of Ethics and the guidelines for their institution. 
        \item For initial submissions, do not include any information that would break anonymity (if applicable), such as the institution conducting the review.
    \end{itemize}

\item {\bf Declaration of LLM usage}
    \item[] Question: Does the paper describe the usage of LLMs if it is an important, original, or non-standard component of the core methods in this research? Note that if the LLM is used only for writing, editing, or formatting purposes and does \emph{not} impact the core methodology, scientific rigor, or originality of the research, declaration is not required.
    \item[] Answer: \answerNA{} 
    \item[] Justification: The development of the core methods in this paper does not involve large language models.
    \item[] Guidelines:
    \begin{itemize}
        \item The answer \answerNA{} means that the core method development in this research does not involve LLMs as any important, original, or non-standard components.
        \item Please refer to our LLM policy in the NeurIPS handbook for what should or should not be described.
    \end{itemize}

\end{enumerate}

\end{document}